\newtheorem{theorem}{Theorem}[section]
\newtheorem{lemma}[theorem]{Lemma}
\newtheorem{definition}{Definition}
\title{Data Quality in Imitation Learning}
\author{%
  Suneel Belkhale \\
  Stanford University \\
  \texttt{belkhale@stanford.edu} \\
  \And
  Yuchen Cui \\
  Stanford University \\
  \texttt{yuchenc@stanford.edu} \\
  \And
  Dorsa Sadigh \\
  Stanford University \\
  \texttt{dorsa@stanford.edu}
}
\begin{document}

\setlength{\abovedisplayskip}{4pt}
\setlength{\belowdisplayskip}{4pt}
\setlength\parskip{0.4em plus 0.1em minus 0.1em}
\setlength{\belowcaptionskip}{-12pt}

\maketitle

\begin{abstract}
In supervised learning, the question of data quality and curation has been overshadowed in recent years by increasingly more powerful and expressive models that can ingest internet-scale data. However, in offline learning for robotics, we simply lack internet scale data, and so high quality datasets are a necessity. This is especially true in imitation learning (IL), a sample efficient paradigm for robot learning using expert demonstrations. Policies learned through IL suffer from state distribution shift at test time due to compounding errors in action prediction, which leads to unseen states that the policy cannot recover from.
Instead of designing new algorithms to address distribution shift, an alternative perspective is to develop new ways of assessing and curating datasets. There is growing evidence that the same IL algorithms can have substantially different performance across different datasets. 
This calls for a formalism for defining metrics of ``data quality'' that can further be leveraged for data curation.
In this work, we take the first step toward formalizing data quality for imitation learning through the lens of distribution shift: a high quality dataset encourages the policy to stay in distribution at test time. We propose two fundamental properties that shape the quality of a dataset: 
i) \emph{action divergence:} the mismatch between the expert and learned policy at certain states; and ii) \emph{transition diversity:} the noise present in the system for a given state and action. We investigate the combined effect of these two key properties in imitation learning theoretically, and we empirically analyze models trained on a variety of different data sources. We show that state diversity is not always beneficial, and we demonstrate how action divergence and transition diversity interact in practice.
\end{abstract}

\section{Introduction}
\label{sec:intro}


Supervised learning methods have seen large strides in recent years in computer vision (CV), natural language processing (NLP), and human-level game playing~\citep{dosovitskiyimage, he2022masked,radford2021learning,brown2020language, thoppilan2022lamda, chowdhery2022palm, reed2022generalist}.
These domains have benefited from large and complex models that are trained on massive internet-scale datasets.
Despite their undeniable power, biases present in these large datasets can result in the models exhibiting unexpected or undesirable outcomes.
For example, foundation models such as GPT-3 trained on uncurated datasets have resulted in instances of racist behavior such as associating Muslims with violence~\cite{abid2021large, bianchi2023easily}.
Thus offline data curation is immensely important for both safety and cost-effectiveness, and it is gaining in prominence in training foundation models~\citep{starcoder,chang2022careful,ouyang2022efficient}. 

Data curation is even more important in the field of robotics, where internet-scale data is not readily available and real-world datasets are small and uncurated. Noise or biases present in the data can lead to dangerous situations in many robotics tasks, for example injuring a person or damaging equipment. In such scenarios, deciding which data to collect and how best to collect it are especially important~\cite{mandlekar2021matters,laskey2017dart,belkhaleplato}.  
Of course, the quality of data depends on the algorithm that uses the data. A common paradigm in robot learning from offline datasets is imitation learning (IL), a data-driven, sample efficient framework for learning robot policies by mimicking expert demonstrations.
However, when learning from offline data using IL, estimating data quality becomes especially difficult, since the ``test set'' the robot is evaluated on is an entirely new data distribution due to compounding errors incurred by the model, i.e., action prediction errors take the model to unseen states. 
This phenomenon is studied and referred to as the distribution shift problem, and prior work has viewed and addressed it through several angles~\citep{ross2011reduction,spencer2021feedback}.
Broadly, prior work address distribution shift either by taking an \emph{algorithm-centric} approach to account for biases in the dataset, or by directly modifying the dataset collection process in a \emph{data-centric} manner. 
Algorithm-centric approaches learn robust policies by imposing task-specific assumptions~\cite{galashov2022data,james2022q,guhur2023instruction}, acquiring additional environment data~\cite{englert2013probabilistic,qi2022imitating}, or leveraging inductive biases in representing states~\cite{zhu2022viola,karamcheti2023language,nair2022rm,chi2023diffusionpolicy,chi2023diffusionpolicy,mandlekar2021matters} and actions~\cite{zhao2023learning, shridharperceiver, chi2023diffusionpolicy}.
What these algorithm-centric works overlook is that changing the data can be as or more effective for policy learning than changing the algorithm. 
In prior data-centric methods, the goal is usually to maximize \textit{state diversity} through shared control~\cite{ross2011reduction,ross2010efficient,cui2019uncertainty,hoque2022thriftydagger,kelly2019hg,mandlekar2020human,schrum2022mind}, noise injection~\cite{laskey2017dart}, or active queries~\cite{cui2018active,biyik2019asking,racca2019teacher,jeon2020rewardrational}. 
By only focusing on state coverage, these works are missing the role that actions (i.e., the expert) plays in the quality of data. A more complete understanding of data quality that integrates both state and action quality would not only improve performance but also save countless hours of data collection.

To better understand the role of both states and actions in the data for learning good policies, consider a single state transition --- the three core factors that affect distribution shift are: the policy action distribution, the the stochasticity of transitions, and the previous state distribution. Note that the previous states are also just a function of the policy and dynamics back through time. Through this we extract two fundamental properties that can influence data quality in IL: \emph{action divergence} and \emph{transition diversity}. Action divergence captures how different the learned policy action distribution is from the expert's actions at a given state: for example, if the expert is very consistent but the policy has high variance, then action divergence will be high and distribution shift is likely. Transition diversity captures the inherent variability of the environment at each state, which determines how the state distribution evolves for a given policy: for example, noisy dynamics in the low data regime can reduce overlap between the expert data and the learned policy distribution. Importantly, these factors can compound over time, thus greatly increasing the potential for distribution shift. While state coverage has been discussed in prior work, we are the first work to formalize the roles of both state and action distributions in data quality, along with how they interact through time: this new data-focused framing leads to insights about how to curate data to learn more effective policies.

To validate and study these properties empirically, we conduct two sets of experiments: (1) Data Noising, where we ablate these properties in robotics datasets to change the policy success rates, and (2) Data Measuring, where we observe human and machine generated datasets and approximately measure these properties in relationship to the policy success rates. Both experiments show that how state diversity, a commonly used heuristic for quality in prior work, is not always correlated with success. Furthermore, we find that in several human generated datasets, less consistent actions at each state is often associated with decreases in policy performance.

\section{Related Work}
\label{sec:related}

Data quality research dates back to the early time of computing and the full literature is out of the scope of this paper. 
Existing literature in machine learning has proposed different dimensions of data quality including accuracy, completeness, consistency, timeliness, and accessibility~\citep{laranjeiro2015survey, cichy2019overview, ehrlinger2022survey}. We could draw similarities between action divergence with the concept of data consistency, and state diversity with completeness. However, instantiation of these metrics is non-trivial and domain-specific. 

Our work is closely related to the imitation learning literature that explicitly addresses the distribution shift problem 
in various ways. Prior work can be divided into two camps: ones that take an \emph{algorithm-centric} approach to account for biases in the dataset, and ones that employ \emph{data-centric} methods for modifying the data collection process. 

\textbf{Algorithm-centric.} 
Robust learning approaches including model-based imitation learning methods learn a dynamics model of the environment and therefore can plan to go back in distribution when the agent visits out of distribution states~\cite{englert2013probabilistic,qi2022imitating}. 
Data augmentation is a useful post-processing technique when one could impose domain or task specific knowledge~\cite{galashov2022data,james2022q,guhur2023instruction}. 
Prior work has also investigated how to explicitly learn from sub-optimal demonstrations by learning a weighting function over demonstrations either to guide BC training~\cite{cao2022learning, beliaev2022imitation} or as a reward function for RL algorithms~\cite{brown2020better, chen2021learning}. These methods either require additional information such as rankings of trajectories~\cite{cao2022learning}, demonstrator identity~\cite{beliaev2022imitation}, or collecting additional environment data~\cite{brown2020better, chen2021learning}.  
Recent efforts on demonstration retrieval augment limited task-specific demonstrations with past robot experiences~\cite{nasirianylearning}.
A recent body of work build better inductive biases into the model for learning robust state features, like pretrained vision or language representations~\cite{zhu2022viola,karamcheti2023language,nair2022rm,chi2023diffusionpolicy}. 
Some approaches modify the action representation to be more expressive to capture all the expert's actions, for example using Gaussian or mixture model action spaces~\cite{chi2023diffusionpolicy,mandlekar2021matters}. 
Others consider temporally abstracted action spaces like waypoints or action sequences to reduce the effective task length and thus mitigate compounding errors~\cite{zhao2023learning, shridharperceiver, chi2023diffusionpolicy}. 
What these algorithm-centric works overlook is that changing the data can be as or more effective for policy learning than changing the algorithm. However, we still lack a comprehensive understanding of what properties in the data matter in imitation learning. 

\textbf{Data-centric.} In more data-centric prior works that discuss data quality, the primary goal is often just to maximize state diversity. 
A large body of research focuses on modifying the data collection process such that the expert experience a diverse set of states. 
\citet{ross2011reduction} proposed to iteratively collect \textit{on-policy} demonstration data with shared-control between the expert and the robot, randomly switching with a gradually decreasing weight on the expert's input, such that the training data contains direct samples of the expert policy at states experienced by the learned policy. 
However, randomly switching the control can make it unnatural for the human demonstrator and leads to noisier human control. To mitigate this issue, methods have been proposed to gate the control more effectively by evaluating metrics such as state uncertainty and novelty~\citep{cui2019uncertainty,hoque2022thriftydagger}. 
Other methods allow the human to gate the control and effectively correct the robot's behavior only when necessary~\citep{kelly2019hg,mandlekar2020human,gandhi2022eliciting,schrum2022mind}. Closely related to our work, \citet{laskey2017dart} takes an optimal control perspective and showed that injecting control noise during data collection can give similar benefits as DAgger-style iterative methods.
Active learning methods have also been developed to guide data collection towards more informative samples~\cite{cui2018active,biyik2019asking,racca2019teacher,jeon2020rewardrational}. 
By only focusing on state coverage, these works are missing the role that actions (i.e., the expert) plays in the quality of data.

\section{Preliminaries}
\label{sec:prelim}
In imitation learning (IL), we assume access to a dataset $\mathcal{D}_N = \{\tau_1,\dots,\tau_N\}$ of $N$ expert demonstrations. Each demonstration $\tau_i$ consists of a sequence of state-action pairs of length $T_i$, $\tau_i = \{(s_1,a_1),\dots,(s_{T_i},a_{T_i})\}$, with states $s \in \mathcal{S}$ and actions $a \in \mathcal{A}$. 
Demonstrations are generated by sampling actions from the expert policy $\pi_E$ through environment dynamics $\rho(s'|s,a)$.
The objective of imitation learning is to learn a policy $\pi_\theta: \mathcal{S} \to \mathcal{A}$ that maps states to actions. Standard behavioral cloning optimizes a supervised loss maximizing the likelihood of the state-action pairs in the dataset:
\begin{equation}
    \mathcal{L(\theta)} = -\frac{1}{|\mathcal{D}_N|}\sum_{(s, a) \in \mathcal{D}_N}{\log{\pi_\theta(a | s),}}
    \label{eq:bc_loss}
\end{equation}
which is optimizing the following objective under finite samples from the expert:
\begin{equation}
     \mathbb{E}_{s\sim\rho_{\pi_E}(\cdot)}\left[D_{\text{KL}}\left(\pi_E(\cdot|s), \pi_\theta(\cdot|s)\right)\right] = -  \mathbb{E}_{s\sim\rho_{\pi_E}(\cdot),\ a\sim\pi_E(\cdot|s)}\left [\log{\pi_\theta(a|s)} \right] + C
     \label{eq:bc_kl}
\end{equation}
The $C$ term here captures the entropy of the expert state-action distribution, which is constant with respect to $\theta$ and thus it does not affect optimality of $\theta$. $\rho_{\pi_E}(s)$ is the state visitation of the expert policy, defined for any policy $\pi$ as follows:
\begin{align}
    \rho_\pi(s)&=\frac{1}{T}\sum_{t=1}^H \rho_\pi^t(s)\\
    \rho_\pi^t(s')&=\int_{s,a}\rho_\pi^t(s,a,s')ds\ da=\int_{s,a}\pi(a|s)\rho(s'|s,a)\rho^{t-1}_\pi(s) ds\ da\label{eq:transition}
\end{align}

\subsection{Distribution Shift in IL} 
Behavioral cloning methods as in~\cref{eq:bc_kl} often assume that the dataset distribution is a good approximation of the true expert policy distribution. In most applications, however, the learned policy is bound to experience novel states that were not part of the training data due to stochasticity of the environment dynamics and the learned policy.
Herein lies the fundamental challenge with imitation learning, i.e., state \emph{distribution shift} between training and test time. Consider the training sample $(s, a, s')$ at timestep $t$ in demonstration $\tau_i$. If the learned policy outputs $\tilde{a} \sim \pi(\cdot|s)$ which has a small action error $\epsilon = \tilde{a} - a$, the new state at the next time step will also deviate: $\tilde{s}' \sim \rho(s' | s, a + \epsilon)$, which in turn affects the policy output at the next step.
In practice, the change in next state can be highly disproportionate to $||\epsilon||$, so small errors can quickly lead the policy out of the data distribution. 
Stochastic dynamics, for example system noise, can compound with the distribution shift caused by policy errors, and as we continue to execute for $T - t$ steps, both these factors can compound over time to pull the policy out of distribution, often leading to task failure. 

We can address this distribution shift problem by minimizing the mismatch between state visitation distribution of a learned policy $\pi$ and the state visitation distribution of an expert policy $\pi_E$ under some $f$-divergence $D_f$:
\begin{equation}
    J(\pi,\pi_E)=D_{f}(\rho_{\pi}(s),\rho_{\pi_E}(s)) \label{eq:distribution_shift}\\
\end{equation}
Next, we connect this distribution shift problem to the question of data quality.

\subsection{Formalizing Data Quality}
\label{sec:prelim:formalizing}

How can we define and measure the quality of a dataset $\mathcal{D}$ in IL? In existing literature, data quality has been used interchangeably with either the proficiency level of the expert at the task or the coverage of state space. However, these notions of quality are loose and incomplete: for example, they do not explain what concretely makes an expert better or worse, nor do they consider how stochasicity in the transitions impacts state space coverage and thus downstream learning. Our goal is to more formally define a measure of quality so that we can then optimize these properties in our datasets.

We posit that a complete notion of dataset quality is one that minimizes distribution shift in~\cref{eq:distribution_shift}. This suggests that we cannot discuss data quality in isolation. The notion of data quality is heavily tied to the algorithm $A$ that leads to the learned policy $\pi_A$ as well as the expert policy $\pi_E$. 
We thus formalize the quality of a dataset $\mathcal{D}_N$ conditioned on the expert policy $\pi_E$ that generates it along with a policy learning algorithm $A$, as the negative distribution shift of a learned policy under some $f$-divergence $D_f$:
\begin{align}
    Q(\mathcal{D}_N; \pi_E, A) &= - D_f\left(\rho_{\pi_A}(s),\rho_{\pi_E}(s)\right),\ \text{where}\ \pi_A = A(\mathcal{D}_N)
    \label{eq:quality}
\end{align}
Here, $\pi_A$ is the policy learned by algorithm $A$ using dataset $\mathcal{D}_N$ of size $N$, which is generated from demonstrator $\pi_E$. 
Note that quality is affected by several components: the choice in demonstrator, the choice of dataset size, and the algorithm. The choice of expert policy $\pi_E$ changes the trajectories in $\mathcal{D}_N$ (which in turn affects policy learning) along with the desired state distribution $\rho_{\pi_E}(s))$. The dataset size controls the amount of information about the expert state distribution present in the dataset: note that $\pi_A$ should match $\rho_{\pi_E}(s)$, but $A$ learns only $\mathcal{D}_N$ not from the full $\rho_{\pi_E}(s)$. The algorithm controls how the data is processed to produce the final policy $\pi_A$ and thus the visited distribution $\rho_{\pi_A}(s)$.

To optimize this notion of quality, we might toggle any one of these factors. Prior work has studied algorithm modifications at length~\cite{mandlekar2021matters,zhu2022viola,belkhaleplato}, but few works study how $\pi_E$ and the dataset $D_N$ should be altered to perform best for any given algorithm $A$. We refer to this as \emph{data curation}. 
In practice we often lack full control over the choice of $\pi_E$, since this is usually a human demonstrator; still, we can often influence $\pi_E$, for example through prompting or feedback~\cite{gandhi2022eliciting}, or we can curate the dataset derived from $\pi_E$ through filtering~\cite{du2023behavior}.

\section{Properties of Data Quality}
\label{sec:properties}

To identify properties that affect data quality in imitation learning, we can study how each state transition $(s,a,s') \in \mathcal{D}$ in the dataset affects the dataset quality in detail. As we defined in~\cref{eq:quality}, the dataset quality relies on distribution shift given an expert policy $\pi_E$ and an imitation learning policy $\pi_A$. This relies on the state visitation distribution $\rho^t_{\pi_A}(s)$, which based on~\cref{eq:transition} depends on three terms: $\pi_A(a|s)$, $\rho(s'|s,a)$, and $\rho^{t-1}_{\pi_A}(s)$.
Intuitively, three clear factors emerge for managing distribution shift: how different the policy $\pi_A(a|s)$ is from the expert $\pi_E(a|s)$ -- which we refer to as \emph{action divergence}, how diverse the dynamics $\rho(s'|a,s)$ are -- which we refer to as \emph{transition diversity}, and how these factors interact over time to produce past state visitation distribution $\rho^{t-1}_{\pi_A}(s)$. Importantly, the past state visitation can be controlled through \emph{action divergence} and \emph{transition diversity} at previous time steps,
so in this section, we will formalize these two properties and discuss their implication on data curation. 

\subsection{Action Divergence}
\label{sec:properties:ac_div}
Action divergence is a measure of distance between the learned policy and the expert policy $D_{f}({\pi_A}(\cdot|s), \pi_E(\cdot|s))$. This can stem from biases in the algorithm or dataset such as mismatched action representations or lack of samples. While this is typically viewed in prior work as a facet of the algorithm or dataset size, importantly action divergence and thus data quality can also be influenced by the expert policy itself. For example, if the demonstrator knows the action representation used by the learning agent \textit{a priori}, then the policy mismatch can be reduced by taking actions that are consistent with that action space. In~\cref{thm:ac_div}\footnote{Proof for all theorems and lemmas in Appendix~\ref{app:theory}}, we illustrate the importance of action divergence by showing that our notion of data quality is lower bounded by the action divergence under the visited state distribution.
\begin{theorem}
Given a policy $\pi_A$ and demonstrator $\pi_E$ and environment horizon length $H$, the distribution shift:
$$D_{\text{KL}}(\rho_{\pi_A},\rho_{\pi_E}) \leq \frac{1}{H}\sum_{t=0}^{H-1} (H-t)D_{\text{KL}}^{s\sim \rho_{\pi_A}^t}(\pi_A(\cdot|s), \pi_E(\cdot|s))$$
\label{thm:ac_div}
\vspace{-10pt}
\end{theorem}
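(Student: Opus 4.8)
The plan is to control the KL divergence between the two time-averaged state visitation distributions by first decomposing it across time via convexity, then bounding each per-timestep marginal divergence by a cumulative sum of action divergences through a one-step recursion, and finally reorganizing the resulting double sum to expose the $(H-t)$ weighting.

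First, since both $\rho_{\pi_A}$ and $\rho_{\pi_E}$ are uniform time averages of their per-timestep marginals $\rho_\pi^t$, I would invoke the joint convexity of the KL divergence in its two arguments to pull the average outside:
$$D_{\text{KL}}(\rho_{\pi_A},\rho_{\pi_E}) \le \frac{1}{H}\sum_{t} D_{\text{KL}}(\rho_{\pi_A}^t,\rho_{\pi_E}^t).$$
This reduces the problem to bounding each marginal divergence $D_{\text{KL}}(\rho_{\pi_A}^t,\rho_{\pi_E}^t)$ between the state distributions reached at a fixed timestep.

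Next, the key lemma is a one-step recursion. Using~\cref{eq:transition}, the marginal $\rho_\pi^t$ is obtained by marginalizing the one-step joint $\rho_\pi^{t-1}(s)\,\pi(a|s)\,\rho(s'|s,a)$ over $(s,a)$. By the data processing inequality, marginalization cannot increase KL, so $D_{\text{KL}}(\rho_{\pi_A}^t,\rho_{\pi_E}^t)$ is at most the KL between these two joints; the chain rule for KL then decomposes that joint divergence into (i) $D_{\text{KL}}(\rho_{\pi_A}^{t-1},\rho_{\pi_E}^{t-1})$, (ii) the expected action divergence $\mathbb{E}_{s\sim\rho_{\pi_A}^{t-1}}\!\left[D_{\text{KL}}(\pi_A(\cdot|s),\pi_E(\cdot|s))\right]$, and (iii) an expected transition term. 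Because both rollouts share the identical dynamics $\rho(s'|s,a)$, term (iii) vanishes, yielding
$$D_{\text{KL}}(\rho_{\pi_A}^t,\rho_{\pi_E}^t) \le D_{\text{KL}}(\rho_{\pi_A}^{t-1},\rho_{\pi_E}^{t-1}) + D_{\text{KL}}^{s\sim\rho_{\pi_A}^{t-1}}(\pi_A(\cdot|s),\pi_E(\cdot|s)).$$
Starting from the base case $D_{\text{KL}}(\rho_{\pi_A}^0,\rho_{\pi_E}^0)=0$ (both policies share the initial state distribution), I would unroll this recursion to obtain $D_{\text{KL}}(\rho_{\pi_A}^t,\rho_{\pi_E}^t) \le \sum_{k<t} D_{\text{KL}}^{s\sim\rho_{\pi_A}^k}(\pi_A(\cdot|s),\pi_E(\cdot|s))$. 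Substituting into the convexity bound and swapping the order of summation collects, for each timestep $k$, a coefficient equal to the number of later timesteps whose marginal divergence it contributes to, which is precisely what produces the $(H-t)$ weighting (up to the exact base-case and indexing convention).

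\textbf{Main obstacle.} The delicate step is the one-step recursion: correctly invoking the data processing inequality for the marginalization together with the chain rule for the joint $(s,a,s')$ divergence, and verifying that the shared-dynamics term cancels exactly so that only the action divergence survives. I also need to be careful that each surviving action-divergence term is an expectation under the \emph{learned} policy's visitation $\rho_{\pi_A}^k$ rather than the expert's, matching the superscript $s\sim\rho_{\pi_A}^t$ in the statement; this is exactly what makes the bound a statement about divergence evaluated on the states the learned policy actually reaches, which is the physically meaningful regime for distribution shift.
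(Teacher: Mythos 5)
Your proposal is correct and takes essentially the same route as the paper's proof: the paper's repeated application of the log-sum inequality is exactly your data-processing/joint-convexity steps, and its cancellation of the shared $\rho(s'|s,a)$ factor inside the log ratio is your vanishing transition term in the chain-rule decomposition. The remaining structure --- one-step recursion under the learned policy's visitation, unrolling from a zero base case, and swapping the double sum to produce the $(H-t)$ weights --- matches the paper step for step.
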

When using KL divergence for distribution shift in~\cref{eq:distribution_shift}, we can see how the quality is lower bounded by the policy action divergence from the optimal policy under the visited state distribution, weighted at each step by time-to-go. Many prior works have noted the compounding error problem~\cite{ross2011reduction,ross2010efficient}, and here too action divergence at earlier timesteps has an out-sized effect on the overall distribution shift. Thus to optimize for the quality of a dataset, we should reduce the aggregated policy mismatch across visited trajectories.

\smallskip\noindent\textbf{Optimality}: Many prior ideas around the ``optimality'' of a demonstration are intimately related to this action divergence property. Suboptimalities that are naturally present in demonstrations like pauses or other action noise have been a source of difficulty when learning imitation policies~\cite{lynch2020learning,belkhaleplato}. These factors make it harder for the model to learn the expert action distributions, thus increasing action divergence. Suboptimality can also come from multi-modal expert policy distributions, and more expressive policy representations (i.e., reducing action divergence between the expert and the policy) has been shown to help~\cite{cui2023cbet,chi2023diffusionpolicy,gandhi2022eliciting}. The speed of demonstration is another common notion of optimality, which decreases total action divergence by reducing horizon length $H$, provided it does not increase the per time step action divergence to compensate. 

\smallskip\noindent\textbf{State Visitation}: Critically, the visitation distribution in~\cref{thm:ac_div} determines \textit{where} action divergence should be low, whereas in the standard BC objective in~\cref{eq:bc_kl}, the divergence is only minimized under samples from the expert distribution $\rho_{\pi_E}^t$. To better understand how the visitation distribution evolves, we now analyze state transitions in greater detail.

\subsection{Transition Diversity}
\label{sec:properties:tr_div}
Transition diversity encompasses the diversity of next state transitions seen at a given state for a certain policy. What role does transition diversity play in minimizing distribution shift? Intuitively, if we consider the upper bound in~\cref{thm:ac_div}, the expert's state coverage should overlap as much as possible with the visited state distribution, but without increasing the action divergence.
\begin{lemma}
    Given a learned policy $\pi_A$ and an expert policy $\pi_E$, assume that the policy is learned such that when $s \in \operatorname{supp}(\rho_{\pi_E}^t)$, $D_{\text{KL}}(\pi_A(\cdot|s), \pi_E(\cdot|s)) \leq \beta$. Then:
    \begin{align*}
        \mathbb{E}_{s\sim \rho_{\pi_A}^t}\left[D_{\text{KL}}(\pi_A(\cdot|s), \pi_E(\cdot|s))\right] \leq \mathbb{E}_{s\in\rho_{\pi_A}^t}&[\beta \mathbbm{1}(s\in\operatorname{supp}(\rho_{\pi_E}^t)) \\
        &+ \mathbbm{1}(s\notin\operatorname{supp}(\rho_{\pi_E}^t))D_{\text{KL}}(\pi_A(\cdot|s), \pi_E(\cdot|s))]
    \end{align*}
    \label{lem:state_entropy}
\vspace{-10pt}
\end{lemma}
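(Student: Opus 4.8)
The plan is to prove the bound by a straightforward partition of the state space according to whether a state lies in the support of the expert's time-$t$ visitation distribution $\rho_{\pi_E}^t$. Since the indicators $\mathbbm{1}(s\in\operatorname{supp}(\rho_{\pi_E}^t))$ and $\mathbbm{1}(s\notin\operatorname{supp}(\rho_{\pi_E}^t))$ sum to one pointwise, I would multiply the integrand $D_{\text{KL}}(\pi_A(\cdot|s),\pi_E(\cdot|s))$ by this trivial partition of unity and then split the expectation $\mathbb{E}_{s\sim\rho_{\pi_A}^t}[\cdot]$ into an in-support term and an out-of-support term by linearity of expectation.

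First I would handle the in-support term. On the event $\{s\in\operatorname{supp}(\rho_{\pi_E}^t)\}$, the lemma's hypothesis gives $D_{\text{KL}}(\pi_A(\cdot|s),\pi_E(\cdot|s))\leq\beta$, so I can pointwise upper bound the integrand by $\beta$ wherever the first indicator is active. This converts the in-support contribution into $\mathbb{E}_{s\sim\rho_{\pi_A}^t}[\beta\,\mathbbm{1}(s\in\operatorname{supp}(\rho_{\pi_E}^t))]$, which is exactly the first term on the right-hand side. The out-of-support term I would leave untouched, since no hypothesis controls the action divergence at states the expert never visits; recombining the two contributions by linearity then reproduces the stated inequality.

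There is essentially no computational obstacle here --- the result is a clean bookkeeping step --- but I would be careful about two points of rigor. The first is integrability: the out-of-support KL term may be large or even infinite where $\pi_E(\cdot|s)$ assigns zero mass to support of $\pi_A(\cdot|s)$, so I would state the inequality in the extended reals, observing that it holds trivially (both sides $+\infty$) if that term diverges. The second and more conceptual point, which is the real payload of the lemma rather than a technical hurdle, is that $\rho_{\pi_A}^t$ can place positive probability on states outside $\operatorname{supp}(\rho_{\pi_E}^t)$; the content of the bound is precisely that the learned bound $\beta$ governs only the in-support mass, while the out-of-support mass is entirely unconstrained. This is what sets up the subsequent discussion of transition diversity: shrinking the left-hand side requires shrinking the out-of-support mass of $\rho_{\pi_A}^t$, and that mass is shaped by how the dynamics $\rho(s'|s,a)$ spread probability relative to the expert's state coverage.
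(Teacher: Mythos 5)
Your proof is correct and is essentially identical to the paper's: the paper likewise splits the expectation with the partition of unity $\mathbbm{1}(s\in\operatorname{supp}(\rho_{\pi_E}^t)) + \mathbbm{1}(s\notin\operatorname{supp}(\rho_{\pi_E}^t))$, bounds the in-support term pointwise by $\beta$ via the hypothesis, and leaves the out-of-support term untouched. Your added care about the extended-real-valued KL term goes slightly beyond the paper's one-line ``simple substitution'' but does not change the argument.
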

\cref{lem:state_entropy} shows that with a good learning algorithm (i.e., when $\beta$ is small), it is important to maximize the overlap between the visited policy and the expert data (minimizes the contribution of the unbounded right term), but as shown in \cref{thm:ac_div} this should not come at the expense of increasing the action divergence of the policy. 
Rather than broadly maximizing state diversity, as prior works do, a more sensible approach is for the expert to maximize the two other factors that affect $\rho_{\pi_A}^t$: \textit{system noise} and \textit{initial state variance} in the expert data.


\smallskip\noindent\textbf{Finite Data and Low Coverage}:
While the above analysis holds for infinite data, maximizing transition diversity can lead to thinly spread coverage in the finite data regime. To analyze the finite data regime in isolation, we consider Gaussian system noise for simplicity. First, we formulate an empirical estimate of state ``coverage'' of the expert (i.e. the deviation from states in the expert dataset). We then show that higher system noise can cause the learned policy $\pi_A$ to deviate more from states sampled in the dataset.


\begin{definition}
    For a given start state $s$, define the next state coverage probability over $N$ samples from $\pi_E$ for tolerance $\epsilon$ as $P_S(s;N,\epsilon) \coloneqq P(\min_{i\in\{1\dots N\}}||s^\prime-s^\prime_{*,i}||_\infty \leq \epsilon)$, where $\{s^\prime_{*,i} \sim \rho_{\pi_E}^{t}(\cdot|s)\}$ is a set of $N$ sampled next state transitions under the expert policy $\pi_E$ starting at $s$, and $s^\prime \sim \rho_{\pi_A}^{t}(\cdot|s)$ is a sampled next state under the policy $\pi_A$ starting at $S$.
    \label{def:coverage}
\end{definition}
We can think of $P_S(s;N,\epsilon)$ as the probability under potential datasets for $\pi_E$ of seeing a next state at test time (under $\pi_A$) that was nearby the next states in the dataset, conditioned at a starting state $s$ (i.e., coverage). This is related to a single step of distribution shift, but measured over a dataset rather than a distribution. Defining coverage as the distance to the dataset being below some threshold is reasonable for function approximators like neural networks~\cite{baxter2000model}.

\begin{theorem}
Given a policy $\pi_A$ and demonstrator $\pi_E$, assume that for state s, if $\rho_{\pi_E}^{t-1}(s) > 0$, then $\pi_A(a|s) = \pi_E(a|s)$. Assume that transitions are normally distributed with fixed and diagonal variance, $\rho(s'|s,a)=\mathcal{N}(\mu(s,a), \sigma^2 I)$, then the next state coverage probability is $P_S(s;N,\epsilon) = 1 - (1 - \operatorname{erf}(\frac{\epsilon}{2\sigma})^d)^{N}$, where $d$ is the dimensionality of the state.
\label{thm:coverage}
\end{theorem}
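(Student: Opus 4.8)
The plan is to reduce the entire computation to a single pairwise comparison and then aggregate over the $N$ dataset samples. First I would invoke the hypothesis that $\pi_A(\cdot|s) = \pi_E(\cdot|s)$ whenever $s$ is reachable by the expert (i.e. $\rho_{\pi_E}^{t-1}(s) > 0$): this forces the test-time next state $s' \sim \rho_{\pi_A}^{t}(\cdot|s)$ and every dataset next state $s'_{*,i} \sim \rho_{\pi_E}^{t}(\cdot|s)$ to be drawn from one and the same next-state distribution. Under the Gaussian transition model $\rho(s'|s,a) = \mathcal{N}(\mu(s,a), \sigma^2 I)$ with a fixed (deterministic) action at $s$, this common distribution is simply $\mathcal{N}(\mu(s,a), \sigma^2 I)$, so $s'$ and each $s'_{*,i}$ are i.i.d. Gaussian.

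Next I would compute the per-sample coverage probability $p := P(\|s' - s'_{*,i}\|_\infty \le \epsilon)$. Since $s'$ and $s'_{*,i}$ are independent draws from $\mathcal{N}(\mu, \sigma^2 I)$, their difference $\Delta_i := s' - s'_{*,i}$ is distributed as $\mathcal{N}(0, 2\sigma^2 I)$. Because the covariance is diagonal, the $d$ coordinates of $\Delta_i$ are independent, each a centered Gaussian of variance $2\sigma^2$. For one coordinate I would evaluate $P(|\Delta_{i,k}| \le \epsilon)$ directly from the Gaussian CDF and rewrite it through the error function, obtaining $\operatorname{erf}\!\big(\tfrac{\epsilon}{2\sigma}\big)$ (the $2\sigma$ in the argument arises because the standardizing factor is $\sqrt{2}\cdot\sqrt{2\sigma^2} = 2\sigma$). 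The $\ell_\infty$ event $\|\Delta_i\|_\infty \le \epsilon$ is the intersection of the $d$ coordinate events, and by independence it factors, giving $p = \operatorname{erf}\!\big(\tfrac{\epsilon}{2\sigma}\big)^d$.

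Finally I would aggregate over the $N$ samples. The coverage event $\min_i \|s' - s'_{*,i}\|_\infty \le \epsilon$ is the complement of ``no dataset sample lands within $\epsilon$ of $s'$,'' so $P_S(s;N,\epsilon) = 1 - P(\forall i:\, \|\Delta_i\|_\infty > \epsilon)$. Treating the $N$ pairwise misses as independent $\mathrm{Bernoulli}(1-p)$ events gives $P(\forall i:\, \|\Delta_i\|_\infty > \epsilon) = (1-p)^N$, and substituting $p = \operatorname{erf}(\tfrac{\epsilon}{2\sigma})^d$ produces the claimed $1 - (1 - \operatorname{erf}(\tfrac{\epsilon}{2\sigma})^d)^N$.

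The step requiring the most care is this last aggregation: the $N$ events $\{\|\Delta_i\|_\infty > \epsilon\}$ all share the common draw $s'$, so conditioned on $s'$ they are independent but unconditionally they are positively correlated. The clean product $(1-p)^N$ therefore relies on treating the comparisons as independent; a fully rigorous computation would write $P_S = 1 - \mathbb{E}_{s'}\big[(1 - q(s'))^N\big]$ with $q(s') = P(\|s'-s'_{*,i}\|_\infty \le \epsilon \mid s')$, and Jensen's inequality (convexity of $x \mapsto (1-x)^N$) then shows the stated formula is an upper bound on the true coverage. I would flag this independence assumption explicitly. The only other modelling choice to watch is reducing the action-marginalized mixture $\int_a \pi(a|s)\,\mathcal{N}(\mu(s,a),\sigma^2 I)\,da$ to a single Gaussian, which is exactly what the deterministic-action reading of the $\pi_A = \pi_E$ hypothesis provides.
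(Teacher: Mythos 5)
Your proposal is correct and follows essentially the same route as the paper's proof: complement the coverage event, factor over the $N$ samples and then over the $d$ coordinates, note that $s'-s'_{*,i}\sim\mathcal{N}(0,2\sigma^2 I)$ because both points are independent draws from $\mathcal{N}(\mu(s,a),\sigma^2 I)$ (via the $\pi_A=\pi_E$ hypothesis), and evaluate each coordinate probability as $\operatorname{erf}(\frac{\epsilon}{2\sigma})$.

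The one substantive difference is the caveat you flag at the end, and you are right to flag it: it is a genuine gap, and it is present in the paper's own proof. The paper writes $P(\min_i\|s'-s'_{*,i}\|_\infty>\epsilon)=\prod_{i=1}^N P(\|s'-s'_{*,i}\|_\infty>\epsilon)$ with no conditioning, i.e., it treats the $N$ miss events as unconditionally independent even though they all share the single draw $s'$; the closing justification (``since all variables are IID'') does not repair this, because independence of the underlying variables does not make the comparison events independent. The rigorous identity is $P_S=1-\mathbb{E}_{s'}\bigl[(1-q(s'))^N\bigr]$ with $q(s')=P(\|s'-s'_{*,i}\|_\infty\leq\epsilon\mid s')$, and by Jensen's inequality (convexity of $x\mapsto(1-x)^N$) the theorem's closed form is an upper bound on the true coverage probability, not an equality — exactly as you state, with the correct direction. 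So your write-up is the same computation as the paper's but strictly more careful: the theorem should either be read conditionally on $s'$ or with ``$=$'' relaxed to ``$\leq$'' (which, notably, still supports the paper's qualitative conclusion that coverage degrades as $\sigma$ grows for fixed $N$). Your remark about action marginalization is likewise a real implicit assumption shared by the paper: both proofs write $s'$ and $s'_{*,i}$ as draws from $\mathcal{N}(\mu(s,a),\sigma^2 I)$ with one common $a$, which requires the (shared) policy to be effectively deterministic at $s$; otherwise the next-state law is a Gaussian mixture and the clean $\operatorname{erf}$ form does not hold.
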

In~\cref{thm:coverage}, we see that even under a policy that is perfect when in distribution, the probability of next state coverage decreases as the variance of system noise $\sigma^2$ increases, for a fixed sampling budget $N$. However, increasing $N$ has a much stronger positive effect on coverage than decreases in $\sigma$, suggesting that system noise is only an issue when the dataset size is sufficiently small. Furthermore, $\epsilon$ here represents some simplistic form of the \textit{generalization} capacity of the policy, and we see here that increasing $\epsilon$ makes us more robust to proportional increases in $\sigma$.

\smallskip\noindent\textbf{Finite Data and Good Coverage}: However, if we assume $N$ is large enough to mitigate the coverage effect of system noise, are there any other effects of transition diversity in the dataset? The answer lies in generalization properties of the algorithm, which as provided in \cref{thm:coverage} are fairly conservative (i.e., in the definition of $P_S(s;N,\epsilon)$). 
In \cref{thm:both_coverage} in~\cref{app:theory}, we use a more loose generalization definition and relax the assumption that the learned policy is perfect. We show as system noise increases, the resulting boost in sample coverage can actually replicate and overcome the effects of high learned policy noise, suggesting that more system noise can actually be beneficial for learning. This finding sheds light on results from prior work and our own experiments in~\cref{sec:analysis} that show the benefits of collecting data in the presence of high system noise~\cite{laskey2017dart}. Plots of both coverage probabilities for different $N$ are included in~\cref{app:theory}.

\subsection{Implications for Data Curation}
\label{sec:properties:implications}
We have shown how action divergence and transition diversity are both tied to distribution shift and thus data quality. Based on action divergence and transition diversity, we now examine downstream implications of these properties on what matters for \emph{data curation}, where the goal is to collect and then select \textit{high quality} demonstrations in our dataset $\mathcal{D}_N$ for good policy learning.

\smallskip \noindent \textbf{Action Consistency}: To minimize action divergence, the algorithm action representation should \textit{align} with the expert's action distribution, for the given dataset size. One algorithmic solution is to improve the expressiveness of the policy action space so it can capture the exact actions at every state that was demonstrated. However, in practice, individual states might only be visited a few times in the dataset, and so if the entropy of the expert policy at those states is high, the learned policy will find it difficult to perfectly match the actions even for the most expressive action spaces. Instead, we argue that expert data should be curated to have more \textit{consistent} actions, e.g., reducing the entropy of the expert policy: $\mathbb{E}_{s\sim \rho_{\pi_A}(\cdot)} [\mathcal{H}(\pi_E(\cdot|s))]$. Since knowing the visited state distribution is impossible beforehand, the best we can do is to encourage low entropy in the expert data distribution:
\begin{equation}
    \min_{\pi_E}\mathbb{E}_{s\sim \rho_{\pi_E}(\cdot)} [\mathcal{H}(\pi_E(\cdot|s))]
    \label{eq:consistency}
\end{equation}
\noindent \textbf{State Diversity}: As discussed previously, the state visitation of the policy depends on the trajectory action distribution, the initial state distribution, and the system noise distribution. Many prior works seek to improve the \textit{state coverage} of the dataset, using some metric similar to~\cref{def:coverage}~\cite{biyik2019asking,jeon2020rewardrational}. However, the required state diversity is a function of the learned policy mismatch (action divergence) and the noise present in the system (transition diversity), and is thus a coarse representation of data quality. Uniformly increasing coverage over the state space is not necessarily a good thing --- as shown in~\cref{sec:properties:ac_div}, if state coverage comes at the expense of action consistency, the result might be worse than a policy trained on less state coverage but more consistent actions. Instead, we argue that we should optimize for system noise.

\smallskip \noindent \textbf{System Noise}: Although system noise at a given state cannot be controlled, experts can control system noise at the trajectory level (e.g. visiting states with more or less system noise). Should system noise be encouraged or discouraged in a dataset? Based on the discussion in~\cref{sec:properties:tr_div}, we hypothesize that when the dataset size is fixed, increasing the system noise can increase the \textit{coverage} of the expert and thus improve robustness in the learned policy (\cref{thm:both_coverage}), but only up to a certain point, when the data becomes too sparse to generalize (\cref{thm:coverage}). Thus, in addition to consistent actions, we posit that expert demonstrators should encourage paths with high system entropy for learning better policies, such that the overall state entropy stays below some threshold $\gamma$ to avoid coverage that is too sparse.
\begin{align}
    \max_{\pi_E}&\ \mathbb{E}_{s\sim \rho_{\pi_E}(\cdot), a\sim\pi_E(\cdot|s)} [\mathcal{H}(\rho(\cdot|a,s))] \nonumber \\
    s.t.&\ H(\rho_{\pi_E}(s)) \leq \gamma
    \label{eq:sysnoise}
\end{align}
\noindent \textbf{Horizon Length}: The length of trajectories in the expert data also can have a large effect on the expert and visited state distributions, and thus on the quality of the dataset. However, while horizon length certainly plays a role, like state diversity, it is a downstream effect of action divergence and transition diversity. Based on previous analysis, what really matters is minimizing the \textit{aggregated} action divergence and transition diversity produced by the expert demonstrator across time. Horizon length alone only crudely measures this, but is often correlated in practice as we show in~\cref{sec:analysis:measuring}.

While prior work in data curation primarily aim to maximize state diversity alone, our analysis of data quality reveals several new properties that matter for data curation, such as action consistency, system noise, and their combined effect over time. As we demonstrate in the next section, understanding these properties and the inherent tradeoffs between them is vital for data curation in imitation learning.

\begin{figure}
    \centering
    \includegraphics[width=\linewidth]{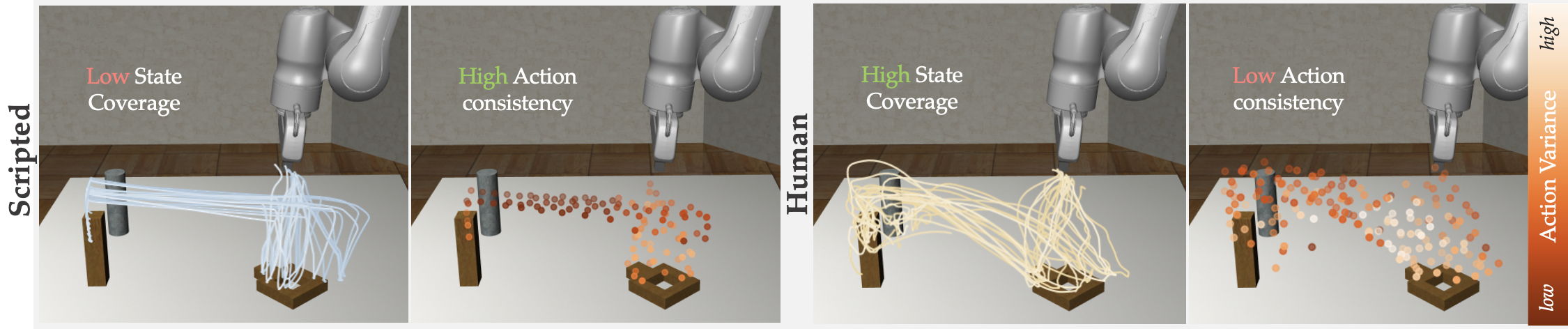}
    \caption{\textbf{Case Study}: Trajectories and action variance for scripted (left two plots) compared to human demonstration data (right two plots). Even though the human data (right) has high state coverage, the action variance is high, leading to high action divergence, and vice versa.}
    \label{fig:hvs}
\end{figure}

\section{Analysis}
\label{sec:analysis}
To empirically analyze how different data properties affect imitation learning, we conduct two sets of experiments. In the first set, we generate various quality expert datasets by adding different types of noise to \emph{scripted} policies, i.e., policies that are hand-designed to be successful at the task. An example scripted policy compared to human demonstrations is shown in \cref{fig:hvs}. We study both noise in the system -- considering transition diversity -- and noise added to the expert policy -- considering both transition diversity and action divergence -- as a function of dataset size. In the second set of experiments, we evaluate the empirical properties (see~\cref{app:metrics}) for real human collected datasets.

\subsection{Data Noising}

\begin{figure}
    \centering
    \includegraphics[width=\linewidth]{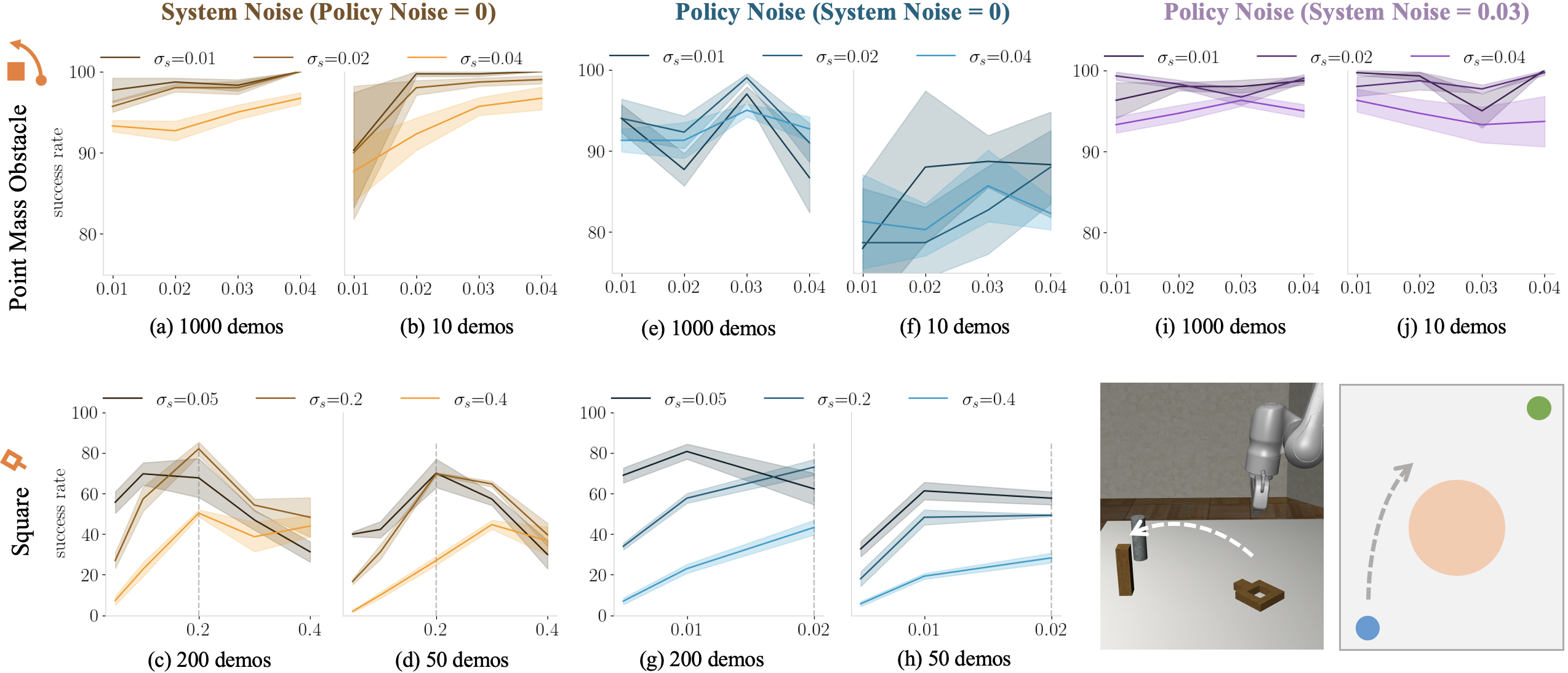}
    \caption{BC Success rates in \emph{PMObstacle} (top row) for 1000 and 10 episodes of data, and in \emph{Square} (bottom row) for 200 and 50 episodes of data (error bars over 3 datasets). X-axis corresponds to injected Gaussian noise in the \textit{dataset} and each line corresponds to injected system noise ($\sigma_s$) during \textit{evaluation}.
    \textbf{System Noise} (left): for large datasets (a)(c), higher system noise during evaluation decreases performance, but more system noise during training does the best. For small datasets (b)(d), we note a similar but exaggerated effect.
    \textbf{Policy Noise} (mid) For large datasets (e)(g), unlike system noise, more expert policy noise often hurts performance despite similar state coverage. For small datasets (f)(h), adding policy noise exaggerates this effect and produces large variance in performance. For \emph{Square}, the dotted lines mark comparable values of noise in terms of state coverage. 
    \textbf{System + Policy Noise} (i)(j): Adding some system noise can make the policy more robust to action divergence.}
    \label{fig:pm_sq_noise}
\end{figure}

We train Behavior Cloning (BC) with data generated with system noise and policy noise in two environments: \emph{PMObstacle}, a 2D environment where a point mass agent must reach a target point without hitting an obstacle in its way; and \emph{Square}, a 3D environment where a robot arm must pick up a square nut and insert it onto a peg (shown in~\cref{fig:hvs}). In both, system and policy noise are Gaussian random variables added to the dynamics and scripted policy, respectively, at each time step, and BC uses an MLP architecture. \cref{fig:pm_sq_noise} shows results in \emph{PMObstacle} (top row) and \emph{Square} (bottom row).

\smallskip\noindent\textbf{State Diversity through Transition Diversity improves performance, up to a point}. 
The left plots in \cref{fig:pm_sq_noise} (a)-(d) show policy success rates as we vary system noise ($\sigma_s$) (more noise for lighter color shades), where (a)(c) show the high-data regime and the (b)(d) show the low data regime. Higher system noise tends to improve policy performance in the high data regime, but only up to a point for \emph{Square} --- after $\sigma_s=0.3$ in the expert data (c), performance starts to drop off, which we hypothesize is due to the low coverage, based on the analysis of transition diversity in \cref{sec:properties:tr_div}. In the low data regime, the story is similar but even more exaggerated, with increasing system noise leading to comparable performance as the high data regime. Once again for \emph{Square}, increasing transition diversity (d) helps until the state coverage is too thin. 
See \cref{tab:pm_obs_sysnoise}, \cref{tab:sq_obs_sysnoise_100ep}, and \cref{tab:sq_obs_sysnoise_10ep} in Appendix~\ref{app:results} for the full sweep of system noise values in each environment.

\smallskip\noindent\textbf{State Diversity at the cost of Action Divergence hurts performance}.
Plots in \cref{fig:pm_sq_noise} (e)(f) and \cref{fig:pm_sq_noise} (i)(j) show policy success rates for increasing the policy noise ($\sigma_p$) in the dataset, where the (e)(g) show the high data regime and the (f)(h) show the low data regime. Note that each value of policy noise yields the same expert state distribution as the corresponding amount of system noise. Since this noise is zero-mean and the learned policy is deterministic, policy noise in the high data regime (e)(g) is only moderately worse as compared to equivalent amounts of system noise (a)(c), suggesting that the action divergence is minor. In fact, due to the added state diversity, adding policy noise in the high data regime helps up to a certain point. However in the low data regime (f)(h), performance is substantially worse compared to comparable system noise, since the policy can not recover the unbiased expert policy from just a few noisy examples (note that in (c)(d), the x-axes are not aligned with those in (g)(h), and $\sigma_s=0.2$ corresponds to $\sigma_p=0.02$ as shown by the dotted line). This illustrates that state diversity coming from the noise in the expert policy can increase action divergence, and thus high state diversity is not universally desirable.
See \cref{tab:pm_obs_polnoise}, \cref{tab:sq_obs_polnoise200}, and \cref{tab:sq_obs_polnoise100} in Appendix~\ref{app:results} for the full sweep of policy noise values in each environment.

\smallskip\noindent\textbf{Transition Diversity can counteract the effects of policy noise}.
In the right two plots (i)(j) for \emph{PMObstacle} in \cref{fig:pm_sq_noise}, the dataset combines both system and policy noise. The system noise is fixed ($\sigma_s=0.03$) and the policy noise is varied in the same range as the middle two plots (just policy noise). Given the analysis in \cref{sec:properties:tr_div} and \cref{thm:both_coverage} in Appendix~\ref{app:theory}, we would expect that having some system noise could actually make the policy much more robust to policy noise, since the state coverage provided in the dataset can help learn a more robust policy (generalization). Indeed we find that just by adding system noise, the learned policy becomes very robust to added policy noise in both the high (i) and low (j) regimes. This suggests that adding transition diversity can help mitigate the affects of action divergence incurred by noisy or sub-optimal experts.
See \cref{tab:pm_obs_syspolnoise} in Appendix~\ref{app:results} for the full sweep of policy noise values for fixed system noise in each environment.

\subsection{Data Measuring}
\label{sec:analysis:measuring}
Next we empirically instantiate several metrics from~\cref{sec:properties:implications} that capture various facets of data quality, and measure different datasets: (1) action variance (higher means less action consistency) measured through clustering nearby states (2) horizon length $H$ (higher means longer trajectories), and (3) state similarity (opposite of state diversity) measured as the average cluster size. See~\cref{app:metrics} for the exact empirical definitions. We leave out system noise since these human datasets were collected in deterministic environments.

In~\cref{tab:sq_metrics}, we consider single and multi-human datasets from the \textit{Square} and \textit{Can} tasks from robomimic~\cite{mandlekar2021matters}. We see that higher action variance and horizon length are often accompanied by decreases in success rates. The \emph{Worse} multi-human datasets have seemingly lower action variance but less state similarity, and yet qualitatively we observed highly multi-modal behaviors in these datasets (e.g., grasping different parts of the nut instead of the handle). We suspect that this type of multi-modality is not measured well by the single time step action variance metric, but rather requires a longer term view of the effects of that action on the scene -- as indicated in \cref{thm:ac_div}. Additionally, state diversity once again is not correlated with performance on the task.
\renewcommand{\arraystretch}{1.3}
\setlength{\tabcolsep}{2.3pt}
\begin{table}[!ht]
    \small
    \centering
    \begin{tabular}{c|cccc||cccc}
        & \multicolumn{4}{|c||}{\emph{Square}} & \multicolumn{4}{c}{\emph{Can}} \\
        ~ & PH & Better & Okay & Worse & PH & Better & Okay & Worse \\ \hline
        Success Rate & 58 & 36 & 12 & 2 & 96 & 56 & 40 & 22 \\
        Dataset Size ($N$) & 200 & 100 & 100 & 100 & 200 & 100 & 100 & 100 \\
        Action Variance & 0.073 & 0.062 & 0.099 & 0.061 & 0.051 & 0.066 & 0.079 & 0.063 \\
        Horizon Length ($H$) & 150 & 190 & 250 & 350 & 115 & 140 & 180 & 300 \\
        State Similarity & 8.2e-5 & 1.8e-4 & 1.7e-4 & 1.2e-4 & 1.0e-4 & 2.1e-4 & 2.4e-4 & 2.0e-4 \\
    \end{tabular}
    \vspace{4pt}
    \caption{Data Quality metrics evaluated on \emph{Square} (left) and \emph{Can} (right) for proficient human (PH), and multi human (Better, Okay, Worse) datasets. PH and Better overlap in some of the data.}
    \label{tab:sq_metrics}
    \vspace{-12pt}
\end{table}

We emphasize that these metrics likely do not provide a complete picture of data quality, but they do provide us insights into why performance is low and how data collection can be improved in future datasets. Through future efforts from the community, we envision a comprehensive set of data metrics that practitioners can use to quickly evaluate the quality of datasets without needing to first train and evaluate models on those datasets.

\section{Discussion}
Data curation is incredibly important for maximizing the potential of any given algorithm, and especially so in imitation learning. 
To curate data, one needs to establish a formalism for assessing data quality. 
The predominant notions of data quality today are solely centered around maximizing \textit{state diversity} in a dataset, which ignore the quality of the expert's demonstrated actions and how these factors interplay. 
In this work, we propose a holistic view of data curation centered around minimizing distribution shift. 
We demonstrate that action divergence and transition diversity are crucial factors in data quality that can often be controlled or measured, and we draw several valuable insights from this shedding light on effective strategies for curating expert datasets.
We find that making actions more consistent tends to increase policy success. Furthermore, we observe a fundamental tradeoff between state diversity and action divergence --- increasing state diversity can often come at the expense of action divergence. Instead of uniformly maximizing state diversity, we show that increasing transition diversity improves performance until the data coverage becomes too sparse.
While the metrics presented in our experiments are often good measures of final performance, we find that comprehensively measuring data quality in practice for real world datasets can be quite challenging. 
Finally, we envision a broader set of metrics informed by our formalism for data quality that practitioners can use to curate datasets, and we believe our work is an important first step in formulating and evaluating these metrics.

\bibliographystyle{plainnat}
\bibliography{references}

\newpage
\appendix 

\section{Theoretical Results}
\label{app:theory}

\cref{thm:ac_div}: 
For a discrete state space $\mathcal{S}$ and action space $\mathcal{A}$, given a policy $\pi_A$ and demonstrator $\pi_E$ and environment horizon length $H$, the distribution shift $D_{\text{KL}}(\rho_{\pi_A},\rho_{\pi_E}) \leq \frac{1}{H}\sum_{t=0}^{H-1} (H-t)D_{\text{KL}}^{s\sim \rho_{\pi_A}^t}(\pi_A(\cdot|s), \pi_E(\cdot|s))$
\begin{proof}
Using the log-sum inequality:
\begin{align*}
    D_{\text{KL}}(\rho^t_{\pi_A},\rho^t_{\pi_E}) &= \int_{s'} \rho^t_{\pi_A}(s') \log \frac{\rho^t_{\pi_A}(s')}{\rho^t_{\pi_E}(s')} \\
    &= \int_{s'} \left(\int_{s,a} \rho^{t-1}_{\pi_A}(s)\pi_A(a|s)\rho(s'|a,s)\right) \log \frac{\int_{s,a} \rho^{t-1}_{\pi_A}(s)\pi_A(a|s)\rho(s'|a,s)}{\int_{s,a} \rho^{t-1}_{\pi_E}(s)\pi_E(a|s)\rho(s'|a,s)} \\
    &\leq \int_{s'} \int_{s,a} \rho^{t-1}_{\pi_A}(s)\pi_A(a|s)\rho(s'|a,s) \log \frac{\rho^{t-1}_{\pi_A}(s)\pi_A(a|s)\rho(s'|a,s)}{\rho^{t-1}_{\pi_E}(s)\pi_E(a|s)\rho(s'|a,s)} \\
    &\leq \int_{s'} \int_{s,a} \rho^{t-1}_{\pi_A}(s)\pi_A(a|s)\rho(s'|a,s) (\log \frac{\rho^{t-1}_{\pi_A}(s)}{\rho^{t-1}_{\pi_E}(s)} + \log \frac{\pi_A(a|s)}{\pi_E(a|s)}) \\
    &\leq \int_{s,a} \rho^{t-1}_{\pi_A}(s)\pi_A(a|s) (\log \frac{\rho^{t-1}_{\pi_A}(s)}{\rho^{t-1}_{\pi_E}(s)} + \log \frac{\pi_A(a|s)}{\pi_E(a|s)}) \\
    &\leq \int_s \rho^{t-1}_{\pi_A}(s) \log \frac{\rho^{t-1}_{\pi_A}(s)}{\rho^{t-1}_{\pi_E}(s)} + \int_{s,a} \rho^{t-1}_{\pi_A}(s)\pi_A(a|s) \log \frac{\pi_A(a|s)}{\pi_E(a|s)} \\
    &\leq D_{\text{KL}}(\rho^{t-1}_{\pi_A},\rho^{t-1}_{\pi_E}) + D_{\text{KL}}^{s\sim \rho_{\pi_A}^t}(\pi_A(\cdot|s), \pi_E(\cdot|s)) \\
    &\leq D_{\text{KL}}(\rho^0(\cdot), \rho^0(\cdot)) + \sum_{j=0}^{t-1}{D_{\text{KL}}^{s\sim \rho_{\pi_A}^j}(\pi_A(\cdot|s), \pi_E(\cdot|s))} \text{\hspace{10pt}$\triangleright$ Expanding recursively}\\
    &\leq \sum_{j=0}^{t-1}{D_{\text{KL}}^{s\sim \rho_{\pi_A}^j}(\pi_A(\cdot|s), \pi_E(\cdot|s))} \\
    D_{\text{KL}}(\rho_{\pi_A},\rho_{\pi_E}) &= \int_{s} \left(\frac{1}{H}\sum_{t=1}^H \rho^t_{\pi_A}(s)\right) \log \frac{\sum_{t=1}^H \rho^t_{\pi_A}(s)}{\sum_{t=1}^H \rho^t_{\pi_E}(s)} \\
    &\leq \int_{s} \sum_{t=1}^H \rho^t_{\pi_A}(s) \log \frac{\rho^t_{\pi_A}(s)}{\rho^t_{\pi_E}(s)} \\
    &\leq \frac{1}{H}\sum_{t=1}^H D_{\text{KL}}(\rho^t_{\pi_A},\rho^t_{\pi_E})\\
    &\leq \frac{1}{H}\sum_{t=1}^H \sum_{j=0}^{t-1}{D_{\text{KL}}^{s\sim \rho_{\pi_A}^j}(\pi_A(\cdot|s), \pi_E(\cdot|s))}\\
    &\leq \frac{1}{H}\sum_{t=0}^{H-1} (H-t)D_{\text{KL}}^{s\sim \rho_{\pi_A}^t}(\pi_A(\cdot|s), \pi_E(\cdot|s))
\end{align*}
\end{proof}

\cref{lem:state_entropy}:
Given learned policy $\pi_A$ and expert $\pi_E$, define  assume that the policy is learned such that when $s \in \text{supp}(\rho_{\pi_E}^t)$, $D_{\text{KL}}(\pi_A(\cdot|s), \pi_E(\cdot|s)) \leq \beta$. Then $\mathbb{E}_{s\sim \rho_{\pi_A}^t}[D_{\text{KL}}(\pi_A(\cdot|s), \pi_E(\cdot|s))] \leq \mathbb{E}_{s\in\rho_{\pi_A}^t}[\beta \mathbbm{1}(s\in\text{supp}(\rho_{\pi_E}^t)) + \mathbbm{1}(s\notin\text{supp}(\rho_{\pi_E}^t))D_{\text{KL}}(\pi_A(\cdot|s), \pi_E(\cdot|s))]$
\begin{proof}
This follows by simple substitution:
\begin{align*}
    \mathbb{E}_{s\sim \rho_{\pi_A}^t}[D_{\text{KL}}(\pi_A(\cdot|s), \pi_E(\cdot|s))] &= \mathbb{E}_{s\in\rho_{\pi_A}^t}[\mathbbm{1}(s\in\text{supp}(\rho_{\pi_E}^t))D_{\text{KL}}(\pi_A(\cdot|s), \pi_E(\cdot|s)) \\
    &\hspace{40pt}+ \mathbbm{1}(s\notin\text{supp}(\rho_{\pi_E}^t))D_{\text{KL}}(\pi_A(\cdot|s), \pi_E(\cdot|s))] \\
    &\leq \mathbb{E}_{s\in\rho_{\pi_A}^t}[\beta \mathbbm{1}(s\in\text{supp}(\rho_{\pi_E}^t)) \\
    &\hspace{40pt}+ \mathbbm{1}(s\notin\text{supp}(\rho_{\pi_E}^t))D_{\text{KL}}(\pi_A(\cdot|s), \pi_E(\cdot|s))]
\end{align*}
\end{proof}

\cref{thm:coverage}:
Given a policy $\pi_A$ and demonstrator $\pi_E$, assume that for state s, if $\rho_{\pi_E}^{t-1}(s) > 0$, then $\pi_A(a|s) = \pi_E(a|s)$. Assume that transitions are normally distributed with fixed and diagonal variance, $\rho(s'|s,a)=\mathcal{N}(\mu(s,a), \sigma^2 I)$, then the next state coverage probability is $P_S(s;N,\epsilon) = 1 - (1 - \operatorname{erf}(\frac{\epsilon}{2\sigma})^d)^{N}$, where $d$ is the dimensionality of the state.

\begin{proof} 
Since samples and dimensions are independent, we can decompose probabilities as follows:
\begin{align*}
    P_S(s; N, \epsilon) &= P(\min_{i\in\{1\dots N\}}||s^\prime-s^\prime_{*,i}||_\infty \leq \epsilon) \\
    &= 1 - P(\min_{i\in\{1\dots N\}}||s^\prime-s^\prime_{*,i}||_\infty > \epsilon) \\
    &= 1 - \prod_{i=1}^N P(||s^\prime-s^\prime_{*,i}||_\infty > \epsilon) \\
    &= 1 - \prod_{i=1}^N \left(1 - P(||s^\prime-s^\prime_{*,i}||_\infty \leq \epsilon)\right) \\
    &= 1 - \prod_{i=1}^N \left(1 - \prod_{j=1}^d P(|s^\prime[j]-s^\prime_{*,i}[j]| \leq \epsilon)\right) \\
\end{align*}
The next state from the policy is $s^\prime \sim \mathcal{N}(\mu(s,a), \sigma^2 I)$, and the next state from the expert is also $s^\prime_{*,i} \sim \mathcal{N}(\mu(s,a), \sigma^2 I)$, so the difference of the two Gaussians $\Delta \coloneqq s^\prime - s^\prime_{*,i}$ is also a Gaussian $\Delta \sim \mathcal{N}(0, 2\sigma^2 I)$. Taking $F$ to be the CDF of the standard normal distribution, we can rewrite the following:
\begin{align*}
    P(|s^\prime[j]-s^\prime_{*,i}[j]| \leq \epsilon) &= 1 - P(|s^\prime[j]-s^\prime_{*,i}[j]| \geq \epsilon) \\
    &= 1 - 2P(\Delta[j] \leq -\epsilon) \\
    &= 1 - 2F(\frac{-\epsilon}{\sqrt{2}\sigma}) \\
    &= \operatorname{erf}(\frac{\epsilon}{2\sigma})
\end{align*}
Since all variables are IID, we can thus rewrite the coverage probability as follows:
\begin{align*}
    P_S(s; N, \epsilon) &= 1 - \prod_{i=1}^N \left(1 - \prod_{j=1}^d P(|s^\prime[j]-s^\prime_{*,i}[j]| \leq \epsilon)\right) \\
    &= 1 - (1 - \operatorname{erf}(\frac{\epsilon}{2\sigma})^d)^{N}
\end{align*}
\end{proof}

We visualize the $P_S$ coverage function on the left side of \cref{fig:coverage_fns} under varying amounts of system noise, for different sample sizes $N$.

\subsection{Generalization under System Noise}

\begin{definition}
Given a policy $\pi_A$, a data generating policy $\pi_E$ and a starting state $s$, define the probability of next state coverage $P_B(s;N) = P(||s'-c||_\infty\leq \max_{i\in\{1\dots N\}} ||s'_i - c||_\infty)$, where $s'_i \sim \rho_{\pi_E}^t(\cdot|s)$ are next state samples from the expert starting at $s$, and $s'_i \sim \rho_{\pi_A}^t(\cdot|s)$ is a sample from $\pi_A$ starting at $s$, and $c=\frac{1}{N}\sum_i^N s'_i$ is the sample ``center''.
\label{def:ball_coverage}
\end{definition}

Intuitively $P_B(s;N)$ is the probability that given $N$ samples starting at $s$, the hyper-sphere defined by the samples also contains the sample under the learned policy. The hyper-sphere is centered at the average next state (under expert samples), and the radius fits all the expert data sampled next states (i.e., with radius $\max_{i\in\{1\dots N\}} ||s'_i - c||_\infty$). Here the hyper-sphere represents the set of next states that the policy can generalize to. For policies learned with neural networks, this aims to represent the ``interpolation'' capacity of these models among the training samples.

\begin{theorem}
Given a policy $\pi_A$ and deterministic demonstrator $\pi_E$, assume that for state s, if $\rho_{\pi_E}^{t-1}(s) > 0$, then $\pi_A(a|s) = \mathcal{N}(\pi_E(s), \sigma_\pi^2 I)$. Assume that transitions are normally distributed with fixed and diagonal variance, $\rho(s'|s,a)=\mathcal{N}(\mu(s,a), \sigma^2 I)$, where $\mu(s,a) = s+\alpha a$ are simplified linear dynamics for scalar $\alpha \in \mathbb{R}$. Then the next state coverage probability is $P_B(s;N) = (1 - (\int_0^\infty \frac{2}{\sigma_\pi}f(\frac{x}{\sigma_\pi})\operatorname{erf}(\frac{x}{\sqrt{2}\sigma_{\pi_E}}))^{Nd})^d$, where $d$ is the dimensionality of the state, $f$ is the PDF of the standard normal, $\sigma_{\pi_E}=\sqrt{(1+\frac{1}{N})}\sigma_s$, and $\sigma_\pi=\sqrt{\sigma_{\pi_E}^2 + \alpha^2\sigma_p^2}$. 
\label{thm:both_coverage}
\end{theorem}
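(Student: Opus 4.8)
The plan is to reduce the definition of $P_B$ to a product of one-dimensional Gaussian tail comparisons, after first pinning down the exact marginal laws of the two relevant deviations from the sample center $c$: the policy deviation $s'-c$ and an expert deviation $s'_i-c$. First I would marginalize out the two independent noise sources in the policy rollout --- the action noise $a\sim\mathcal{N}(\pi_E(s),\sigma_p^2 I)$ and the system noise --- using the linear dynamics $\mu(s,a)=s+\alpha a$. This gives the policy next state $s'\sim\mathcal{N}(\mu_E,(\sigma_s^2+\alpha^2\sigma_p^2)I)$ with $\mu_E\coloneqq s+\alpha\pi_E(s)$, while each expert next state is $s'_i\sim\mathcal{N}(\mu_E,\sigma_s^2 I)$, so both share the common mean $\mu_E$. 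Since $c=\frac{1}{N}\sum_i s'_i$ averages $N$ i.i.d.\ expert samples, $c\sim\mathcal{N}(\mu_E,\tfrac{\sigma_s^2}{N}I)$. Because the test sample $s'$ is a genuinely separate rollout, it is independent of $c$, so $s'-c\sim\mathcal{N}(0,\sigma_\pi^2 I)$ exactly, with $\sigma_\pi^2=\sigma_s^2+\alpha^2\sigma_p^2+\tfrac{\sigma_s^2}{N}=\sigma_{\pi_E}^2+\alpha^2\sigma_p^2$; treating a single expert deviation as a fresh sample independent of the center gives $s'_i-c\sim\mathcal{N}(0,\sigma_{\pi_E}^2 I)$ with $\sigma_{\pi_E}^2=(1+\tfrac{1}{N})\sigma_s^2$, recovering exactly the two scales named in the statement.

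Next I would unpack the coverage event of \cref{def:ball_coverage}. Writing $Y_k\coloneqq(s'-c)[k]$ and $X_{ik}\coloneqq(s'_i-c)[k]$, the radius is the global maximum $R=\max_{i\in\{1\dots N\},\,k\in\{1\dots d\}}|X_{ik}|$ over all $Nd$ expert component deviations, and the event $\|s'-c\|_\infty\le R$ is exactly $\bigcap_{k=1}^d\{|Y_k|\le R\}$. Since the covariances are diagonal the coordinates are independent, so I would factor across the $d$ policy coordinates to produce the outer power $d$: $P_B=\prod_{k=1}^d P(|Y_k|\le R)$. For a single coordinate I pass to the complement, $P(|Y_k|\le R)=1-P\big(\forall(i,k'):|X_{ik'}|<|Y_k|\big)$, i.e.\ one minus the probability that every one of the $Nd$ expert deviations is dominated by the policy deviation.

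The final computation evaluates $I\coloneqq P(|X|<|Y|)$ for independent $X\sim\mathcal{N}(0,\sigma_{\pi_E}^2)$ and $Y\sim\mathcal{N}(0,\sigma_\pi^2)$. Conditioning on $|Y|=x$ and using $P(|X|\le x)=\operatorname{erf}(\tfrac{x}{\sqrt{2}\sigma_{\pi_E}})$ together with the half-normal density $\tfrac{2}{\sigma_\pi}f(\tfrac{x}{\sigma_\pi})$ of $|Y|$ gives $I=\int_0^\infty\tfrac{2}{\sigma_\pi}f(\tfrac{x}{\sigma_\pi})\operatorname{erf}(\tfrac{x}{\sqrt{2}\sigma_{\pi_E}})\,dx$, which is precisely the integral in the statement. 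Factoring the joint dominance probability over the $Nd$ comparisons as $I^{Nd}$ then yields $P(|Y_k|\le R)=1-I^{Nd}$, and hence $P_B=(1-I^{Nd})^d$.

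The main obstacle is that the two factorizations are not exact and must be acknowledged as the modeling approximations implicit in \cref{def:ball_coverage}. The $N$ expert deviations $s'_i-c$ are mutually correlated and correlated with $c$ (their exact per-coordinate variance is $(1-\tfrac{1}{N})\sigma_s^2$, not $\sigma_{\pi_E}^2$), so equating their law with that of an independent fresh sample is exactly where the $(1+\tfrac{1}{N})\sigma_s^2$ scale comes from. Likewise, the $Nd$ dominance events $\{|X_{ik'}|<|Y_k|\}$ all share the single random threshold $|Y_k|$, so replacing the exact $P(\bigcap)=\int_0^\infty\operatorname{erf}(\tfrac{x}{\sqrt{2}\sigma_{\pi_E}})^{Nd}\tfrac{2}{\sigma_\pi}f(\tfrac{x}{\sigma_\pi})\,dx$ by $I^{Nd}$ treats the comparisons as independent Bernoulli trials, and the coordinate independence of the coverage events given the shared $R$ is a further such simplification. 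I would state these independence assumptions explicitly, since they are what convert the otherwise-coupled order statistics into the clean product form claimed; as a sanity check, the resulting formula behaves correctly in the limit $\sigma_s\to\infty$, where $\sigma_\pi/\sigma_{\pi_E}\to1$, $I\to\tfrac12$, and $P_B\to1$, matching the claim that increasing system noise can overcome policy noise. The remaining steps are routine Gaussian bookkeeping.
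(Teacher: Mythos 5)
Your proposal is correct and follows essentially the same route as the paper's own proof: factor $P_B$ over the $d$ policy coordinates, pass to the complement, factor the dominance event over all $Nd$ expert component deviations, and evaluate the single comparison $P(|X|<|Y|)$ by conditioning on the half-normal law of $|Y|$, giving $I=\int_0^\infty \frac{2}{\sigma_\pi}f(\frac{x}{\sigma_\pi})\operatorname{erf}(\frac{x}{\sqrt{2}\sigma_{\pi_E}})\,dx$ and hence $P_B=(1-I^{Nd})^d$. The only real difference is one of candor, and it favors you: the paper asserts these factorizations and the deviation law $s'_i-c\sim\mathcal{N}(0,(1+\frac{1}{N})\sigma^2 I)$ as exact (``since samples and dimensions are independent,'' ``since all variables are IID''), whereas you correctly flag that the $s'_i$ are correlated with $c$ (exact variance $(1-\frac{1}{N})\sigma_s^2$), that the $Nd$ dominance events share the single random threshold $|Y_k|$, and that the coordinate-wise coverage events share the radius $R$ --- so the clean product form holds only under these independence approximations, a caveat the paper's proof silently elides (along with minor typos such as $\alpha\sigma_p^2$ for $\alpha^2\sigma_p^2$).
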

\begin{proof} 
Since samples and dimensions are independent, we can decompose probabilities as follows:
\begin{align*}
    P_B(s;N) &= P(||s'-c||_\infty\leq \max_{i\in\{1\dots N\}} ||s'_i - c||_\infty) \\
    &= \prod_{j=1}^d P(|s'[j]-c[j]| \leq \max_{i\in\{1\dots N\}} ||s'_i - c||_\infty) \\
    &= \prod_{j=1}^d (1 - P(|s'[j]-c[j]| > \max_{i\in\{1\dots N\}} ||s'_i - c||_\infty)) \\
    &= \prod_{j=1}^d (1 - \prod_{i=1}^N P(|s'[j]-c[j]| > ||s'_i - c||_\infty)) \\
    &= \prod_{j=1}^d (1 - \prod_{i=1}^N \prod_{k=1}^d P(|s'[j]-c[j]| > |s'_i[k] - c[k]|)) \\
\end{align*}
Since the dynamics are additive, next state from the policy is $s^\prime \sim \mathcal{N}(s+\pi_E(s), (\sigma^2 + \alpha \sigma^2_p) I)$, while the next state from the expert is $s'_i \sim \mathcal{N}(s+\pi_E(s), \sigma^2 I)$. Thus the center (sample mean) has distribution $c \sim \mathcal{N}(s+\pi_E(s), \frac{1}{N}\sigma^2 I)$. so the difference of the two Gaussians $\Delta_i \coloneqq s'_i - c$ is also a Gaussian $\Delta_i \sim \mathcal{N}(0, \sigma_{\pi_E}^2 I)$ where $\sigma_{\pi_E}^2 = (1+\frac{1}{N})\sigma^2$, and similarly for $\Delta \coloneqq s' - c$,  $\Delta \sim \mathcal{N}(0, \sigma_\pi^2 I)$ where $\sigma_\pi^2 = (1+\frac{1}{N})\sigma^2 + \sigma_p^2$. Taking $F$ to be the CDF of the standard normal distribution and $f$ to be the PDF, by symmetry we can rewrite the following:
\begin{align*}
    P(|s'[j]-c[j]| \geq |s'_i[k]-c[k]|) &= P(|\Delta[j]| \geq |\Delta_i[k]|) \\
    &= 4 \int_0^\infty \frac{1}{\sigma_\pi}f(\frac{x}{\sigma_\pi})\int_0^x f(\frac{y}{\sigma_{\pi_E}})\ dy\ dx \\
    &= 4 \int_0^\infty \frac{1}{\sigma_\pi}f(\frac{x}{\sigma_\pi})\left(F(\frac{x}{\sigma_{\pi_E}}) - F(0)\right)\ dx \\
    &= \frac{2}{\sigma_\pi} \int_0^\infty f(\frac{x}{\sigma_\pi})\operatorname{erf}(\frac{x}{\sqrt{2}\sigma_{\pi_E}})\ dx
\end{align*}
Since all variables are IID, we can thus rewrite the coverage probability as follows
$$P_B(s;N) = \left(1 - \left(\int_0^\infty \frac{2}{\sigma_\pi}f(\frac{x}{\sigma_\pi})\operatorname{erf}(\frac{x}{\sqrt{2}\sigma_{\pi_E}})\right)^{Nd}\right)^d$$
\end{proof}

In \cref{thm:both_coverage}, while the probability of next state coverage is not in closed form, we visualize several examples of this function below to gain intuition for how high $\sigma_s$ can improve the coverage likelihood of the learned policy even under notable $\sigma_p$ when $N$ is high enough, despite the fact that system noise is present even under the learned policy. We visualize this function on the right side of \cref{fig:coverage_fns} under increasing system noise, for various amounts of test time noise $\sigma_p$.

\begin{figure}
    \centering
    \begin{minipage}{0.49\linewidth}
        \includegraphics[width=\linewidth]{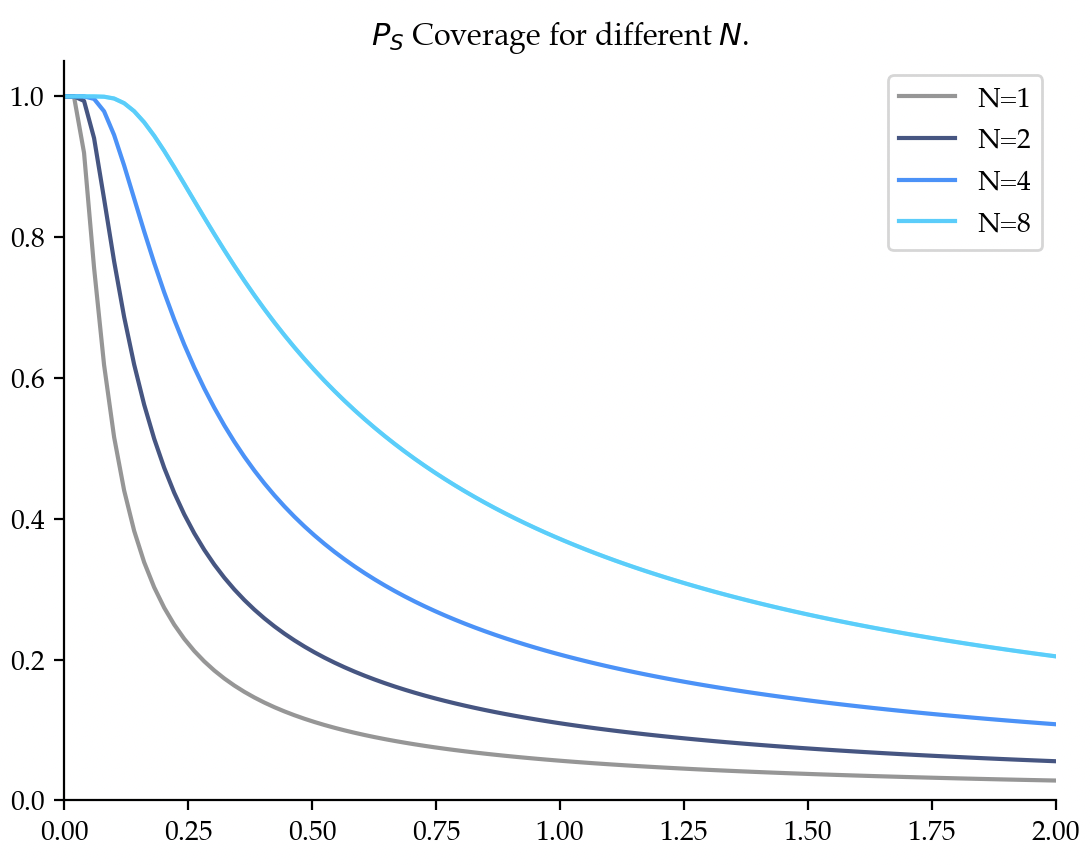}
    \end{minipage}
    \hfill 
    \begin{minipage}{0.49\linewidth}
        \includegraphics[width=\linewidth]{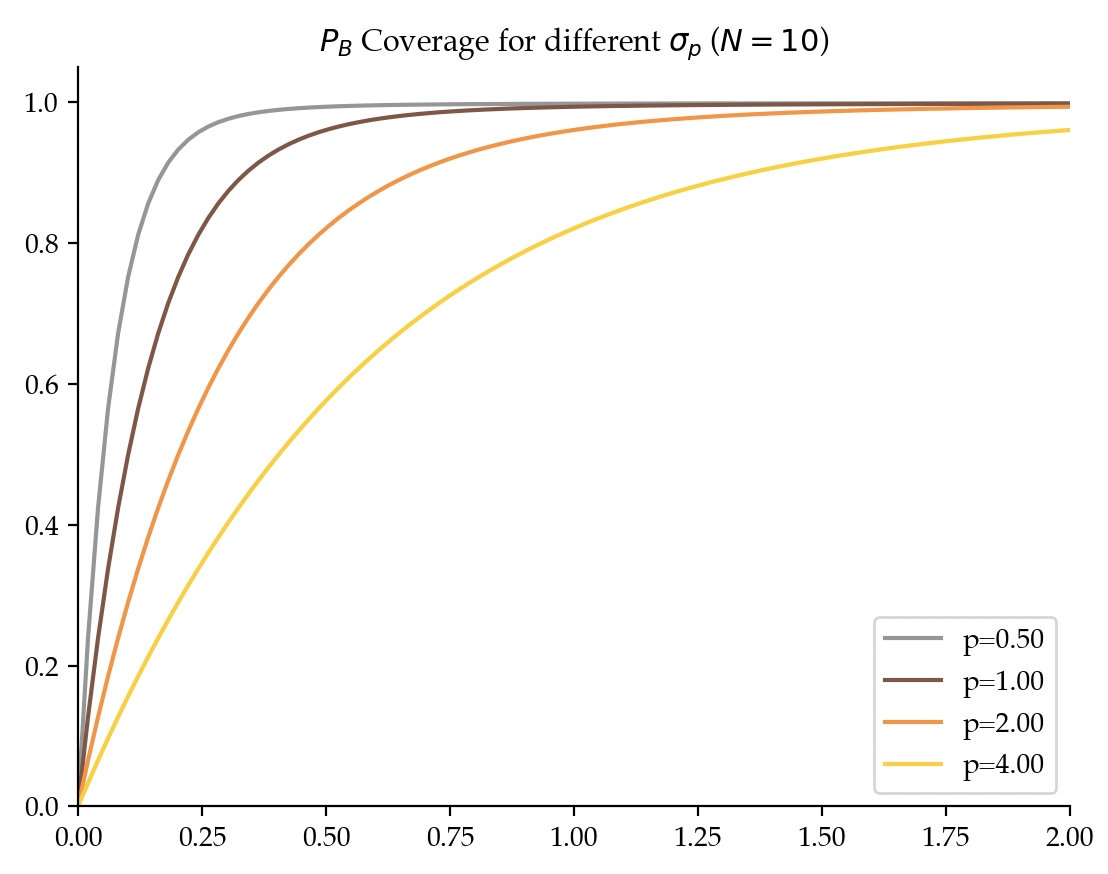}
    \end{minipage}
    \caption{Different coverage functions, where the y-axis denotes coverage probability, and the x-axis is the amount of system noise (standard deviation). \textbf{Left}: $P_S(s;N,\epsilon)$ plotted for one-dimensional state under various amounts of data ($N$). Adding more system noise quickly reduces the likelihood of ``coverage'' of the test time next state under this conservative definition of coverage, with more robustness as $N$ gets larger. \textbf{Right}: $P_B(s;N)$ plotted for one-dimensional state under fixed $N=10$, but varying the amount of noise in the test time policy. We see that under this more loose coverage model, adding system noise can make coverage likely even under double or triple the noise in the learned policy.}
    \label{fig:coverage_fns}
\end{figure}

\section{Metrics of Data Quality}
\label{app:metrics}
Having formalized action divergence and transition diversity in Sec.~\ref{sec:properties} as two fundamental considerations in a dataset, how can we \textit{measure} these properties in a given dataset?

\smallskip\noindent\textbf{Action Variance}: To measure action consistency, the empirical form of the objective in Eqn.~\ref{eq:consistency} is intractable without access to the underlying expert action distribution $\pi_E$. Instead we propose using the empirical variance of the action distribution in the data to approximate the ``spread'' of the data. In continuous state spaces, we can estimate variance using a coverage distance $\epsilon$ to cluster nearby states, and then measuring the per dimension variance across the corresponding actions within said cluster. Defining a cluster to be $C(s, \mathcal{D}) = \{\tilde{s},\tilde{a},\tilde{s'} \in \mathcal{D} : ||s - \tilde{s}|| \leq \epsilon \}$, we can compute the variance as:
\begin{equation}
    \text{ActionVariance}(\mathcal{D}) = \frac{1}{|D|}\sum_{s,a\in\mathcal{D}} (a - \sum_{\tilde{s},\tilde{a},\tilde{s}' \in C(s, \mathcal{D})} \tilde{a})^2
\end{equation}
The choice in $\epsilon$ corresponds to the \textit{generalization} of the learning model to nearby states, similar to the notion of coverage in Definition~\ref{def:coverage}. We use this metric of action consistency in Sec.~\ref{sec:analysis} to study human generated datasets of various quality.

\smallskip\noindent\textbf{State Similarity}: To measure the consistency of states, we approximate the number of ``nearby'' states using the same clustering process as in the Action Variance metric, and measure the expected cluster size as a fraction of the overall data. 
\begin{equation}
    \text{StateSimilarity}(\mathcal{D}) = \frac{1}{|D|}\sum_{s,a\in\mathcal{D}} |C(s, \mathcal{D})|
\end{equation}


While these approximate forms do not encapsulate the full spectrum of possible metrics, we believe these metrics help advance our empirical understanding of data quality for imitation learning. In section~\ref{sec:analysis:measuring} in the main text, we analyze these metrics of data quality in several environments across different dataset sources.

\section{Results}
\label{app:results}

The performance results under system noise, policy noise, and both noises are shown with a broader sweep for both \textit{PMObstacle} and \textit{Square} in the tables below.

\begin{table}[!ht]
    \small
    \centering
    \begin{tabular}{c|cccc|cccc}
        ~ & $\sigma_s=0.01$ & $\sigma_s=0.02$ & $\sigma_s=0.03$ & $\sigma_s=0.04$ & $\sigma_s=0.01$ & $\sigma_s=0.02$ & $\sigma_s=0.03$ & $\sigma_s=0.04$ \\ \hline
        SCRIPTED & 100 & 100 & 99 & 96 & ~ & ~ & ~ & ~ \\
        $\sigma_s=0.01$ & $97.7(1.5)$ & $95.7(0.7)$ & $96.7(1.1)$ & $93.3(0.7)$ & $90.3(7.1)$ & $90.0(8.2)$ & $94.0(2.9)$ & $87.7(3.7)$\\  
        $\sigma_s=0.02$ & $98.7(0.5)$ & $98.0(0.5)$ & $97.7(1.0)$ & $92.7(1.2)$ & $99.7(0.3)$ & $98.0(0.9)$ & $94.3(1.4)$ & $92.3(2.0)$\\  
        $\sigma_s=0.03$ & $98.3(0.7)$ & $98.0(0.8)$ & $99.0(0.5)$ & $95.0(0.9)$ & $99.7(0.3)$ & $98.7(0.5)$ & $97.7(0.5)$ & $95.7(1.1)$\\  
        $\sigma_s=0.04$ & $100.0(0.0)$ & $100.0(0.0)$ & $99.3(0.3)$ & $96.7(0.7)$ & $100.0(0.0)$ & $99.0(0.5)$ & $98.7(0.5)$ & $96.7(1.4)$\\  
    \end{tabular}
    \vspace{4pt}
    \caption{\textbf{System Noise}: Success rates (and standard error) for BC in PMObstacle, for 1000 episodes (left) and 10 episodes (right) of data, under system noise. Rows correspond to injecting gaussian system noise ($\sigma_s$) into the \textit{dataset} of increasing variance, and columns correspond to injecting noise during \textit{evaluation}. The diagonal in both sub-tables represents evaluating in distribution. \textbf{Left}: For large datasets, higher system noise during evaluation tends to decrease the performance of each model (rows left to right), but more system noise during training generally produces the best models (columns top to bottom). \textbf{Right}: For small datasets, we observe a similar but exaggerated effect as the left table.}
    \label{tab:pm_obs_sysnoise}
\end{table}

\begin{table}[!ht]
    \small
    \centering
    \begin{tabular}{c|cccc|cccc}
        ~ & $\sigma_s=0.01$ & $\sigma_s=0.02$ & $\sigma_s=0.03$ & $\sigma_s=0.04$ & $\sigma_s=0.01$ & $\sigma_s=0.02$ & $\sigma_s=0.03$ & $\sigma_s=0.04$ \\ \hline
        SCRIPTED & 100 & 100 & 99 & 96 & ~ & ~ & ~ & ~ \\
        $\sigma_p=0.01$ & $94.0(1.7)$ & $94.0(2.4)$ & $94.7(1.8)$ & $91.3(1.4)$ & $78.0(8.6)$ & $78.7(6.7)$ & $81.3(5.0)$ & $81.3(5.8)$\\ 
        $\sigma_p=0.02$ & $87.7(2.0)$ & $92.3(2.0)$ & $90.7(2.0)$ & $91.3(2.2)$ & $88.0(9.4)$ & $78.7(4.4)$ & $80.7(3.1)$ & $80.3(3.2)$\\ 
        $\sigma_p=0.03$ & $97.0(0.9)$ & $99.0(0.5)$ & $97.0(0.0)$ & $95.0(0.8)$ & $88.7(3.2)$ & $82.7(5.4)$ & $88.7(5.2)$ & $85.7(4.4)$\\ 
        $\sigma_p=0.04$ & $86.7(4.3)$ & $91.0(2.4)$ & $93.3(1.4)$ & $92.7(1.5)$ & $88.3(6.5)$ & $88.0(4.5)$ & $86.0(5.9)$ & $82.3(2.0)$\\ 
    \end{tabular}
    \vspace{4pt}
    \caption{\textbf{Policy Noise}: Success rates (and standard error) for BC in PMObstacle, for 1000 episodes (left) and 10 episodes (right) of data, under learned policy noise. Rows correspond to injecting gaussian policy noise ($\sigma_p$) into the \textit{expert} of increasing variance, and columns correspond to injecting system noise during \textit{evaluation}. \textbf{Left}: For large datasets, unlike system noise in~\cref{tab:pm_obs_sysnoise}, more policy noise during training often produces the worst models (columns top to bottom). \textbf{Right}: For small datasets, adding policy noise produces large variance in performance across runs. Importantly, the datasets in each row have the same observed state diversity as the corresponding row in~\cref{tab:pm_obs_sysnoise}, but performance is almost universally lower in both sub-tables here, supporting the idea that state diversity is a coarse metric for success.}
    \label{tab:pm_obs_polnoise}
\end{table}

\begin{table}[!ht]
    \small
    \centering
    \begin{tabular}{c|cccc|cccc}
        ~ & $\sigma_s=0.01$ & $\sigma_s=0.02$ & $\sigma_s=0.03$ & $\sigma_s=0.04$ & $\sigma_s=0.01$ & $\sigma_s=0.02$ & $\sigma_s=0.03$ & $\sigma_s=0.04$ \\ \hline
        SCRIPTED & 100 & 100 & 99 & 96 & ~ & ~ & ~ & ~ \\
        $\sigma_p=0.01$ & $96.3(2.2)$ & $99.3(0.5)$ & $97.7(0.3)$ & $93.3(1.0)$ & $99.7(0.3)$ & $98.0(1.2)$ & $96.7(0.5)$ & $96.3(1.4)$\\ 
        $\sigma_p=0.02$ & $98.0(0.5)$ & $98.3(0.5)$ & $97.7(0.7)$ & $94.7(1.0)$ & $99.3(0.5)$ & $98.7(1.1)$ & $96.0(2.2)$ & $94.7(1.7)$\\ 
        $\sigma_p=0.03$ & $98.0(0.8)$ & $96.7(1.0)$ & $98.3(1.0)$ & $96.3(0.7)$ & $95.0(2.1)$ & $97.7(0.5)$ & $97.7(0.5)$ & $93.3(2.2)$\\  
        $\sigma_p=0.04$ & $98.7(0.5)$ & $99.0(0.5)$ & $97.3(0.3)$ & $95.0(0.8)$ & $100.0(0.0)$ & $99.7(0.3)$ & $99.0(0.8)$ & $93.7(3.1)$\\  
    \end{tabular}
    \vspace{4pt}
    \caption{\textbf{System Noise + Policy Noise}: Success rates (and standard error) for BC in PMObstacle, for 1000 episodes (left) and 10 episodes (right) of data, under learned policy noise for a fixed amount of system noise ($\sigma_p=0.03$). Here we see how system noise improves the robustness of the model to added policy noise.}
    \label{tab:pm_obs_syspolnoise}
\end{table}

\setlength{\tabcolsep}{5pt}
\begin{table}[!ht]
    \small
    \centering
    \begin{tabular}{c|ccccc}
        ~ & $\sigma_s=0.05$ & $\sigma_s=0.1$ & $\sigma_s=0.2$ & $\sigma_s=0.3$ & $\sigma_s=0.4$ \\ \hline
        $\sigma_s=0.05$ & $55.7(5.3)$ & $50.0(5.4)$ & $27.0(3.9)$ & $12.0(2.4)$ & $7.3(2.2)$\\ 
        $\sigma_s=0.1$ & $69.7(5.5)$ & $69.0(3.7)$ & $57.3(6.0)$ & $50.3(6.0)$ & $22.7(3.5)$\\ 
        $\sigma_s=0.2$ & $67.7(9.6)$ & $68.7(12.6)$ & $82.0(3.4)$ & $74.3(2.7)$ & $50.3(1.7)$\\ 
        $\sigma_s=0.3$ & $47.0(4.9)$ & $53.3(5.9)$ & $54.3(3.0)$ & $50.7(4.3)$ & $38.7(7.3)$\\ 
        $\sigma_s=0.4$ & $31.3(5.0)$ & $37.7(8.2)$ & $48.3(9.7)$ & $49.0(8.7)$ & $44.0(5.3)$\\ 
    \end{tabular}
    \vspace{4pt}
    \caption{\textbf{System Noise, 200ep}: Success rates for BC in Square, for 200 episodes of data, under system noise. Rows correspond to injecting gaussian system noise ($\sigma_s$) into the \textit{dataset} of increasing variance, and columns correspond to injecting noise during \textit{evaluation}. The diagonal in both sub-tables represents evaluating in distribution. In both sub-tables we see how policies with low data coverage (low system noise) generalize the worst to increasing noise at test time. More system noise during training generally produces the best models (columns top to bottom).}
    \label{tab:sq_obs_sysnoise_100ep}
\end{table}

\setlength{\tabcolsep}{5pt}
\begin{table}[!ht]
    \small
    \centering
    \begin{tabular}{c|ccccc}
        ~ & $\sigma_s=0.05$ & $\sigma_s=0.1$ & $\sigma_s=0.2$ & $\sigma_s=0.3$ & $\sigma_s=0.4$ \\ \hline
        $\sigma_s=0.05$ & $40.0(1.2)$ & $33.7(3.1)$ & $16.7(1.4)$ & $4.3(1.0)$ & $2.0(0.5)$\\ 
        $\sigma_s=0.1$ & $42.3(4.0)$ & $39.7(3.8)$ & $31.3(3.7)$ & $19.7(3.1)$ & $10.0(1.6)$\\ 
        $\sigma_s=0.2$ & $70.0(7.0)$ & $73.7(5.4)$ & $69.7(0.7)$ & $55.3(1.2)$ & $27.0(2.2)$\\ 
        $\sigma_s=0.3$ & $57.3(3.1)$ & $58.7(3.1)$ & $64.7(1.4)$ & $60.7(0.3)$ & $44.7(2.2)$\\ 
        $\sigma_s=0.4$ & $30.0(7.0)$ & $33.7(6.4)$ & $39.7(5.7)$ & $39.7(6.5)$ & $36.7(6.9)$\\
    \end{tabular}
    \vspace{4pt}
    \caption{\textbf{System Noise, 50ep}: Success rates for BC in Square, for 50 episodes of data, under system noise. Rows correspond to injecting gaussian system noise ($\sigma_s$) into the \textit{dataset} of increasing variance, and columns correspond to injecting noise during \textit{evaluation}. The diagonal in both sub-tables represents evaluating in distribution. In both sub-tables we see how policies with low data coverage (low system noise) generalize the worst to increasing noise at test time. More system noise during training generally produces the best models (columns top to bottom).}
    \label{tab:sq_obs_sysnoise_10ep}
\end{table}

\begin{table}[!ht]
    \small
    \centering
    \begin{tabular}{c|ccccc}
        ~ & $\sigma_s=0.05$ & $\sigma_s=0.1$ & $\sigma_s=0.2$ & $\sigma_s=0.3$ & $\sigma_s=0.4$ \\ \hline
        $\sigma_p=0.005$ & $69.0(3.7)$ & $59.0(3.7)$ & $34.0(1.7)$ & $20.7(2.4)$ & $7.0(1.6)$\\ 
        $\sigma_p=0.01$ & $80.7(3.7)$ & $78.0(4.2)$ & $57.7(2.4)$ & $38.0(1.7)$ & $23.0(2.2)$\\ 
        $\sigma_p=0.02$ & $62.3(7.8)$ & $71.7(6.1)$ & $73.0(3.9)$ & $65.3(2.8)$ & $43.3(3.6)$\\ 
    \end{tabular}
    \vspace{4pt}
    \caption{\textbf{Policy Noise, 200ep}: Success rates for BC in Square, for 200 episodes of data, under learned policy noise. Rows correspond to injecting gaussian policy noise ($\sigma_p$) into the \textit{dataset} of increasing variance, and columns correspond to injecting noise during \textit{evaluation}. In the high data regime, we see that more policy noise tends to improve performance (columns top to bottom), since the noise is unbiased so with enough samples from the scripted policy, the model will recover an unbiased policy.}
    \label{tab:sq_obs_polnoise200}
\end{table}

\begin{table}[!ht]
    \small
    \centering
    \begin{tabular}{c|ccccc}
        ~ & $\sigma_s=0.05$ & $\sigma_s=0.1$ & $\sigma_s=0.2$ & $\sigma_s=0.3$ & $\sigma_s=0.4$ \\ \hline
        $\sigma_p=0.005$ & $32.7(3.8)$ & $30.3(3.2)$ & $18.0(3.6)$ & $7.0(0.8)$ & $5.7(1.2)$\\ 
        $\sigma_p=0.01$ & $61.3(4.3)$ & $59.0(6.7)$ & $48.3(3.7)$ & $29.7(1.7)$ & $19.3(1.4)$\\ 
        $\sigma_p=0.02$ & $57.7(3.2)$ & $58.3(3.1)$ & $49.3(0.5)$ & $41.3(0.5)$ & $28.3(2.4)$\\ 
    \end{tabular}
    \vspace{4pt}
    \caption{\textbf{Policy Noise, 50ep}: Success rates for BC in Square, for 50 episodes of data, under learned policy noise. Rows correspond to injecting gaussian policy noise ($\sigma_p$) into the \textit{dataset} of increasing variance, and columns correspond to injecting noise during \textit{evaluation}. As the amount of data is reduced, there is a significant drop in performance for added policy noise in the dataset, along with higher performance variation compared to 200eps, since the policy can no longer recover an unbiased policy.}
    \label{tab:sq_obs_polnoise100}
\end{table}

\end{document}